\documentclass{article}

\usepackage[utf8]{inputenc}
\usepackage[T1]{fontenc}
\usepackage{amsmath,amssymb,amsthm}
\usepackage{mathtools}
\usepackage{graphicx}
\usepackage{booktabs}
\usepackage{xcolor}
\usepackage{microtype}
\usepackage{hyperref}
\usepackage{subcaption}
\usepackage{enumitem}
\usepackage{caption}
\usepackage{algorithm}
\usepackage{algorithmic}
\usepackage{natbib}

\newtheorem{theorem}{Theorem}[section]
\newtheorem{lemma}[theorem]{Lemma}
\newtheorem{assumption}[theorem]{Assumption}
\newtheorem{definition}[theorem]{Definition}
\newtheorem{remark}[theorem]{Remark}
\newtheorem{corollary}[theorem]{Corollary}
\newtheorem{proposition}[theorem]{Proposition}

\hypersetup{
    colorlinks=true,
    linkcolor=blue,
    citecolor=blue,
    urlcolor=blue
}

\title{
Avoiding Premature Collapse: \\ Adaptive Annealing for Entropy-Regularized Structural Inference
}

\author{Yizhi Liu \\ \small Department of Computer Science, Stony Brook University \\ \small \texttt{liuyizhi774@gmail.com}}
\date{January 23, 2026}

\begin{document}
\maketitle

\begin{abstract}
Differentiable matching layers and residual connection paradigms, often implemented via entropy-regularized Optimal Transport (OT) \cite{cuturi2013sinkhorn, peyre2019computational}, serve as critical mechanisms in structural prediction and architectural scaling. However, recovering discrete permutations or maintaining identity mappings via annealing $\epsilon\rightarrow0$ is notoriously unstable \cite{peyre2019computational}. We identify a fundamental mechanism for this failure: \textbf{Premature Mode Collapse}. By analyzing the non-normal dynamics of the Sinkhorn fixed-point map \cite{trefethen2005spectra}, we reveal a theoretical thermodynamic speed limit: standard exponential cooling outpaces the contraction rate of the inference operator, which degrades as $O(1/\epsilon)$ \cite{cominetti1994asymptotic, weed2018explicit}. To address this, we propose \textbf{Efficient Piecewise Hybrid Adaptive Stability Control (EPH-ASC)}, an adaptive scheduling algorithm that monitors the stability of the inference process. We demonstrate that EPH-ASC is essential for stabilizing \textbf{Manifold-Constrained Hyper-Connections (mHC)} \cite{deepseek2025mhc} during large-scale training on the FineWeb-Edu dataset, preventing late-stage gradient explosions by enforcing a linear stability law.
\end{abstract}

\section{Introduction}
Entropy-regularized Optimal Transport (OT) has become a standard surrogate for combinatorial inference \cite{cuturi2013sinkhorn, altschuler2017near} and modern macro-architecture design \cite{deepseek2025mhc, zhu2024hyper}. Practitioners often attempt to recover hard assignments or specialized routing by annealing the regularization parameter $\epsilon\rightarrow0$ \cite{genevay2018learning, peyre2019computational}. Recently, studies such as \textbf{Manifold-Constrained Hyper-Connections (mHC)} \cite{deepseek2025mhc} have utilized the Sinkhorn-Knopp algorithm \cite{sinkhorn1967concerning} to project residual mappings onto the Birkhoff polytope, thereby restoring the identity mapping property in multi-stream architectures \cite{deepseek2025mhc}.

Empirically, however, this cooling process is fragile \cite{peyre2019computational, blondel2018smooth}. As $\epsilon$ decreases, the sensitivity of the optimal plan to cost perturbations blows up as $O(1/\epsilon)$ \cite{cominetti1994asymptotic, weed2018explicit}, a phenomenon we define as the \textbf{Thermodynamic Speed Limit}. In the presence of high-variance gradients from real-world datasets like FineWeb-Edu, standard exponential annealing \cite{chizat2024annealing} violates this limit, causing the composite mapping in HC architectures to diverge or collapse \cite{deepseek2025mhc}. We propose EPH-ASC to reconcile this instability by monitoring the \textbf{Primal Drift} \cite{weed2018explicit, eisenberger2022implicit} and triggering a "Thermodynamic Pause" when the distributional shift exceeds the solver's restoring capacity.

\section{The Mechanism of Inference Collapse}
\label{sec:dynamics}

\subsection{Geometric Picture: Basin Shrinkage}
As the temperature $\varepsilon \to 0$, the entropic map $\mathcal S_\varepsilon(C)$ sharpens soft belief states into near-permutations. Geometrically, the transport polytope decomposes into basins of attraction around permutation vertices. The failure mode we study—\emph{early locking}—occurs when the current inference state $P_t$ is attracted into a wrong basin because the optimal posterior drifted faster than the fixed-point iteration could correct. Figure~\ref{fig:trap_intro} visualizes this premature collapse.

\begin{figure}[t!]
\centering
\includegraphics[width=0.85\columnwidth]{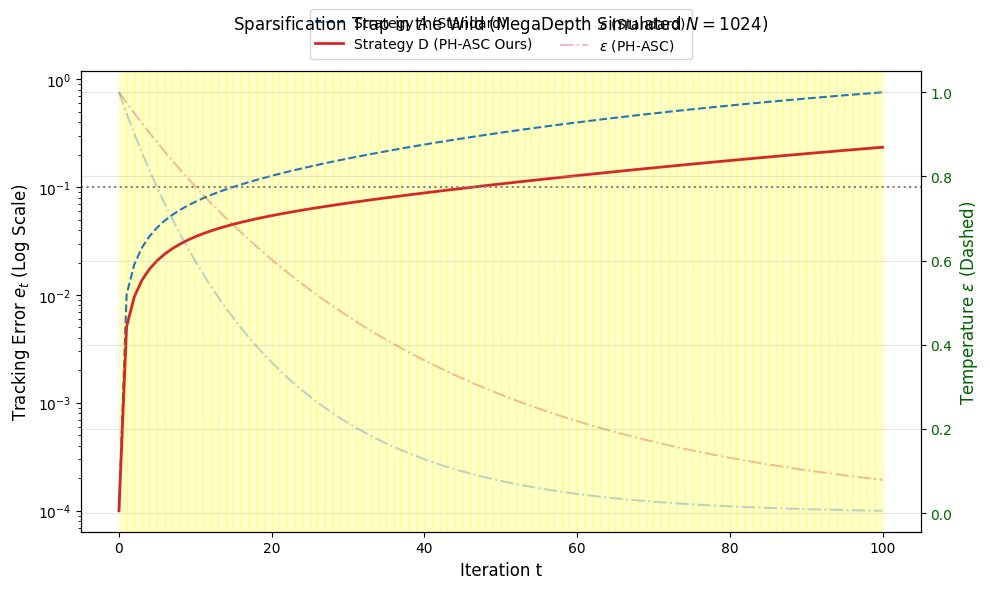}
\caption{\textbf{Premature Mode Collapse.} Standard annealing (blue) breaches the stability threshold $R$ (dotted), causing early locking into a spurious mode. Ours (red) detects the stability violation and pauses cooling. (Simulation)}
\label{fig:trap_intro}
\end{figure}

\subsection{Thermodynamic Sensitivity \texorpdfstring{($1/\varepsilon$)}{(1/epsilon)}}
To quantify the "velocity" of the distribution shift, we adopt a localized non-degeneracy assumption that fixes an active support set $S$ for small $\varepsilon$. Under this assumption, the sensitivity of the Sinkhorn map with respect to $\varepsilon$ scales like $1/\varepsilon$. This follows from implicit differentiation of the entropic optimality conditions \cite{cominetti1994asymptotic, weed2018explicit}.

\textbf{Remark (Linear Stability Scaling).} Since the restoring force of the inference operator collapses and sensitivity scales as $O(1/\varepsilon)$, the radius of the effective stability basin shrinks proportionally with $\varepsilon$. This suggests that the permissible distributional shift $\tau_t$ must follow a linear scaling law $\tau_t \propto \varepsilon$, a property we exploit in Section \ref{sec:method} to design an efficient adaptive schedule.

\subsection{Transient Inference Error and Pseudospectra}
Linearizing the Sinkhorn fixed-point map yields a Jacobian $J_\varepsilon$. While its eigenvalues describe asymptotic contraction, its \textbf{non-normal structure} creates a "shear" effect that can amplify transient inference errors.
Pseudospectral theory quantifies how contours of the resolvent extend beyond the spectral radius \cite{trefethen2005spectra}. We show (Theorem A.2) that the modal condition number $\kappa(V)$ of the Jacobian effectively compresses the basin of attraction by roughly $\kappa(V)$.

\begin{proposition}[Linear Scaling of the Stability Basin]
\label{prop:linear_scaling}
Let the cost matrix $C$ satisfy the localized non-degeneracy Assumption (stable active support $S$). For sufficiently small $\epsilon$, the spectral gap of the Sinkhorn Jacobian $J_\epsilon$ satisfies $1 - \rho(J_\epsilon) \ge \gamma \cdot \epsilon$. Consequently, to ensure the inference error $e_t$ remains within a local linearization region $R(\epsilon)$ (whose size may depend on $\epsilon$ but vanishes no faster than linearly as $\epsilon \to 0$), the permissible drift $\tau_{max}$ must scale linearly:
\begin{equation}
    \tau_{max}(\epsilon) \le \frac{R(\epsilon) \cdot \gamma}{\kappa(V)} \cdot \epsilon
\end{equation}
\end{proposition}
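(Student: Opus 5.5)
The proof splits into two parts. First, a spectral-gap estimate $1-\rho(J_\epsilon)\ge\gamma\epsilon$ from the structure of the Sinkhorn linearization on the active support $S$. Second, a perturbed linear recursion for the tracking error $e_t$, bounded using the eigenbasis factorization $J_\epsilon = V\Lambda V^{-1}$ of Theorem A.2.

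\textbf{Spectral gap.} Work in the dual (log-scaling) variables $(f,g)$. One full Sinkhorn sweep linearizes at the fixed point to
\[
J_\epsilon = D_r^{-1}P\,D_c^{-1}P^\top,
\]
restricted to the complement of the trivial direction $(f,g)\mapsto(f+c,\,g-c)$. Here $P=P^\star_\epsilon$ is the optimal entropic plan, and $D_r,D_c$ are its row and column sums.

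This $J_\epsilon$ is a stochastic matrix, similar to the symmetric matrix $D_r^{-1/2}PD_c^{-1}P^\top D_r^{-1/2}$. Its second eigenvalue equals $\sigma_2^2$, where $\sigma_2$ is the second singular value of the normalized plan. So the gap is controlled by the conductance of the bipartite graph weighted by $P$.

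Under the non-degeneracy assumption, the entries of $P$ split into two groups:
\begin{itemize}
\item entries on $S$ stay bounded below by a constant;
\item entries off $S$ behave like $e^{-\Delta_{ij}/\epsilon}$.
\end{itemize}
For a near-permutation support, the graph on $S$ alone is disconnected into matched pairs. The only coupling across components comes from the next-cheapest entries.

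This is the step I expect to be the main obstacle. Taken literally, the off-support mass is exponentially small, which gives a gap of order $e^{-\Delta/\epsilon}$ rather than $\gamma\epsilon$. To obtain linear scaling, I would strengthen the localized assumption so that the relevant cross-couplings are of order $\epsilon$. Concretely, I would use the implicit-function expansion of \cite{cominetti1994asymptotic,weed2018explicit}: the Hessian of the dual objective on $S$ is $\epsilon^{-1}$ times a fixed graph Laplacian $L_S$, and its restoring curvature relative to the step size yields $1-\rho\ge \lambda_2(L_S)\,\epsilon/\max_i d_i=:\gamma\epsilon$.

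I would state this reduction explicitly as the content of the assumption. I would then verify the eigenvalue bound by a Cheeger/Rayleigh-quotient argument on $L_S$.

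\textbf{Drift-to-error bound.} Let $x_t^\star$ be the moving fixed point. Define the error $e_t=x_t-x_t^\star$ and the per-step drift $\tau_t=\|x_{t+1}^\star-x_t^\star\|$. Inside the linearization region, the error obeys
\[
e_{t+1}=J_\epsilon e_t+\delta_t,\qquad \|\delta_t\|\le\tau_t,
\]
up to higher-order terms that I absorb into the choice of $R(\epsilon)$.

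Unrolling the recursion and using $\|J_\epsilon^k\|\le\kappa(V)\rho^k$ (the bound behind Theorem A.2) gives
\[
\|e_t\|\le \kappa(V)\Big(\rho^t\|e_0\|+\tau_{\max}\sum_k\rho^k\Big)\le \kappa(V)\|e_0\|\rho^t+\frac{\kappa(V)\,\tau_{\max}}{1-\rho}.
\]

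\textbf{Conclusion.} Requiring $\sup_t\|e_t\|\le R(\epsilon)$, with a negligible transient term such as $e_0=0$, forces
\[
\tau_{\max}\le \frac{R(\epsilon)\,(1-\rho)}{\kappa(V)}.
\]
Inserting the spectral-gap estimate $1-\rho\ge\gamma\epsilon$ yields the displayed bound. The hypothesis that $R(\epsilon)$ shrinks no faster than linearly ensures the right-hand side is at most linear in $\epsilon$ in the leading term. It also ensures the quadratic remainder, of order $\|e\|^2/\epsilon$ from the $1/\epsilon$-scaled Hessian, is dominated inside the region. I would finish by checking this remainder domination carefully, since it is what fixes the admissible size of $R(\epsilon)$.
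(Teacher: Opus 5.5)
\textbf{The gap is the first step, the lower bound $1-\rho(J_\epsilon)\ge\gamma\epsilon$.} Your own diagnosis already shows that this cannot be obtained from Assumption~\ref{ass:local-nondeg_full_appendix}.

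Take $n=2$, $C=\begin{pmatrix}0&1\\1&0\end{pmatrix}$, uniform marginals and $q=e^{-1/\epsilon}$. Then $P=\frac{1}{2(1+q)}\begin{pmatrix}1&q\\q&1\end{pmatrix}$ satisfies the assumption with $S$ the diagonal. Off the gauge direction, your $J_\epsilon=D_r^{-1}PD_c^{-1}P^\top$ has the single eigenvalue $\bigl(\tfrac{1-q}{1+q}\bigr)^2$. Hence $1-\rho(J_\epsilon)=\tfrac{4q}{(1+q)^2}\approx 4e^{-1/\epsilon}$, which eventually drops below $\gamma\epsilon$ for every $\gamma>0$.

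Your repair does not close this, for three reasons.
\begin{itemize}
\item Sinkhorn performs exact block maximization, so $J_\epsilon$ depends only on $P$ and is invariant under rescaling the dual Hessian. The factor $\epsilon^{-1}$ in front of $L_S$ cancels against the implicit step and puts no factor $\epsilon$ into the gap.
\item On the support graph alone, the gap tends to a positive constant when that graph is connected, so the inequality holds trivially and gives no linear law. For a permutation support, $\lambda_2(L_S)=0$, so your bound $\lambda_2(L_S)\epsilon/\max_i d_i$ is vacuous exactly in the regime the statement targets.
\item Cross-couplings of order $\epsilon$ are not a strengthening you can then verify. They are a new hypothesis that builds in the conclusion. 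They already fail in the example above, where the coupling is $e^{-1/\epsilon}$, and more generally whenever off-support reduced costs stay bounded away from zero.
\end{itemize}

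The paper does not supply this step either. Theorem~\ref{app:proofs_spectral} and Lemma~\ref{lemma:A6_replacement} prove only the reverse inequality $1-\rho(J_\epsilon)\le\mathrm{dist}(1,\sigma(J_\epsilon))\le (M_\Phi/c)\,\epsilon$, deduced from the $1/\epsilon$ sensitivity. So $\gamma$ has to enter as an explicit hypothesis on $J_\epsilon$, not as a consequence of support stability.

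\textbf{The drift-to-error part.} Granting that hypothesis, your argument follows the paper's route. The paper combines the Neumann-series bound of Lemma~\ref{lemma:equilibrium}, the estimate $\|(I-J)^{-1}\|\le\kappa(V)/\mathrm{dist}(1,\sigma(J))$ from Lemma~\ref{lemma:A6_replacement}, and $\mathrm{dist}(1,\sigma(J))\ge 1-\rho(J)$. Your estimate $\sum_k\|J^k\|\le\kappa(V)/(1-\rho)$ is slightly better, since it controls $\sup_t\|e_t\|$ and not just the limit.

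Two logical directions need fixing.
\begin{itemize}
\item An upper bound on $\|e_t\|$ gives only a sufficient condition. The bound $\tau_{\max}\le R(\epsilon)(1-\rho)/\kappa(V)$ guarantees $\sup_t\|e_t\|\le R(\epsilon)$, but requiring $\sup_t\|e_t\|\le R(\epsilon)$ does not \emph{force} it. Inserting $1-\rho\ge\gamma\epsilon$ is valid only in that sufficiency direction, where it certifies $R(\epsilon)\gamma\epsilon/\kappa(V)$ as a safe threshold. A genuine ``must'' would need a lower bound on the error, that is, the opposite inequality $1-\rho\le C\epsilon$ plus a drift component along the slow eigenvector.
\item The remainder check bounds $R(\epsilon)$ from above; it is not ensured by the hypothesis on $R(\epsilon)$. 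The remainder has size $\|e\|^2/\epsilon$ and the guaranteed restoring margin is $(1-\rho)\|e\|\ge\gamma\epsilon\|e\|$. Your bounds therefore certify domination only for $\|e\|\lesssim\gamma\epsilon^2$. That is in tension with, not implied by, the hypothesis that $R(\epsilon)$ vanishes no faster than linearly.
\end{itemize}
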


\begin{figure}[t!]
    \centering
    \begin{subfigure}[b]{0.48\linewidth}
        \centering
        \includegraphics[width=\linewidth]{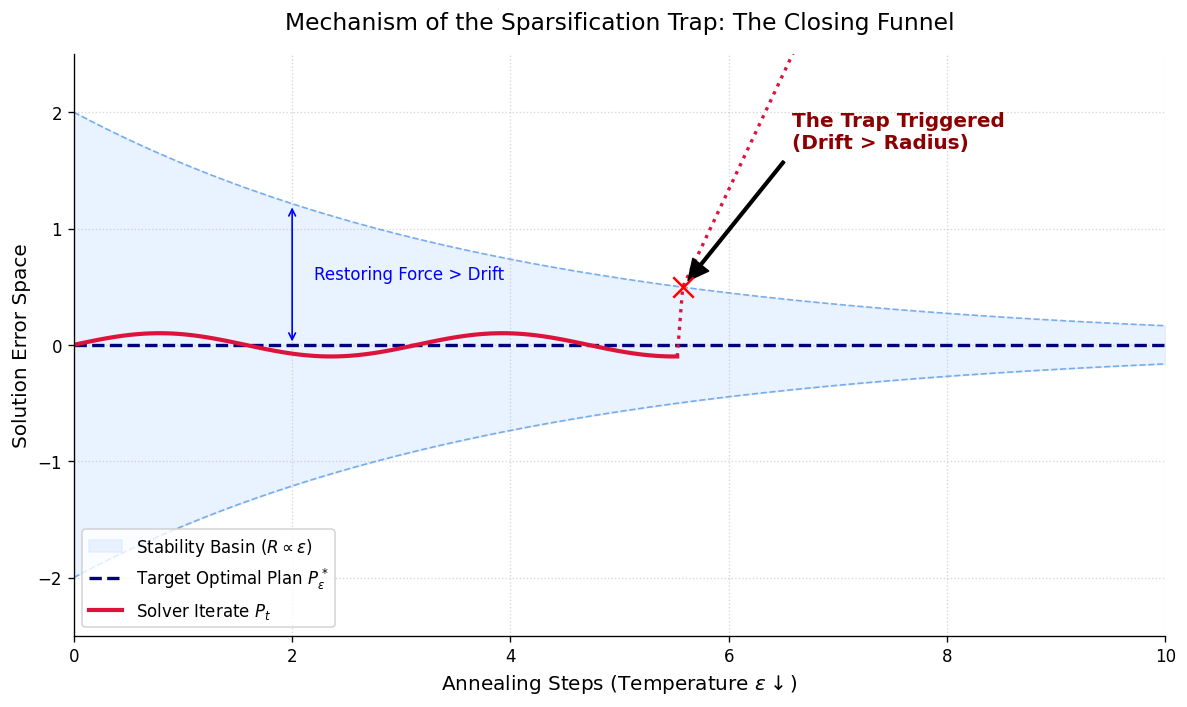} 
        \caption{\textbf{Macroscopic: The Closing Funnel.} 
        As $\epsilon \to 0$, the stability basin (blue) shrinks. Collapse occurs when the distributional shift exceeds the basin radius (red 'X').}
        \label{fig:funnel_mechanism}
    \end{subfigure}
    \hfill
    \begin{subfigure}[b]{0.48\linewidth}
        \centering
        \includegraphics[width=\linewidth]{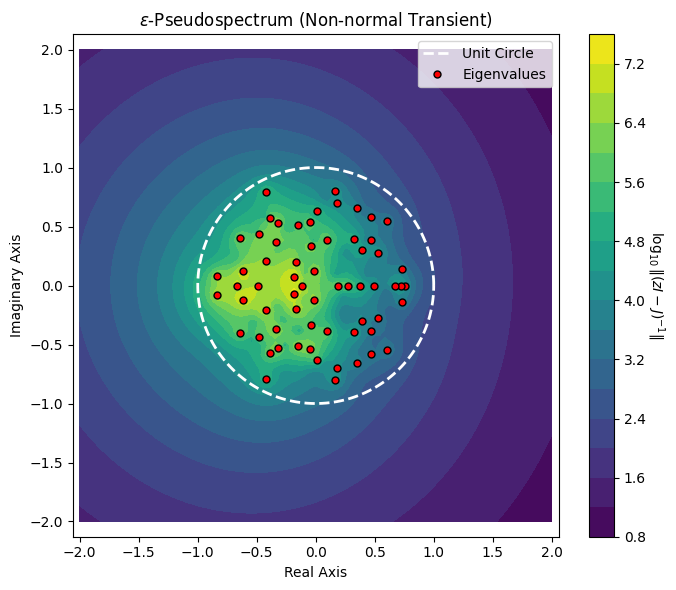}
        \caption{\textbf{Microscopic: Non-normal Instability.} 
        The $\epsilon$-pseudospectrum contours extend beyond the unit circle, quantifying the transient error amplification.}
        \label{fig:pseudospectrum}
    \end{subfigure}
    \caption{\textbf{The Dual View of Inference Collapse.} (a) Geometric intuition: The inference fails when the target drifts faster than the shrinking basin allows. (b) Spectral reality: This shrinkage is quantified by the non-normal pseudospectrum.}
    \label{fig:trap_mechanism_combined}
\end{figure}


\section{Theoretical Analysis: The Thermodynamic Speed Limit}
\label{sec:theory}

To understand the mechanism of premature collapse, we model the annealing process as a discrete-time tracking problem. The algorithm must maintain the iterate $P_t$ within the basin of attraction of the moving fixed point $P^*_{\epsilon_t}$.

\subsection{Problem Setup and Sinkhorn Dynamics}

Let $\epsilon_t$ be the annealing schedule with step size $\delta_t := \epsilon_t - \epsilon_{t+1}$. The inference dynamics are governed by the recurrence $P_{t+1} = \mathcal{S}_{\epsilon_{t+1}}(C, P_t)$. We analyze the tracking error $e_t := P_t - P^*_{\epsilon_t}$ relative to the linearization radius $R$ of the basin.

We rely on the localized non-degeneracy of the cost matrix (Assumption A.1 in Appendix), which implies the following fundamental properties of the Sinkhorn operator:

\begin{proposition}[Sinkhorn Dynamics]
\label{prop:sinkhorn_dynamics}
Under the localized non-degeneracy assumption, for sufficiently small $\epsilon$:
\begin{enumerate}
    \item \textbf{Sensitivity:} The fixed point drift scales as $\|\nabla_\epsilon P^*_\epsilon\| = \Theta(\epsilon^{-1})$.
    \item \textbf{Restoring Force:} The spectral gap of the Jacobian $J_\epsilon$ vanishes as $1 - \rho(J_\epsilon) = \Theta(\epsilon)$.
    \item \textbf{Resolvent Growth:} The resolvent norm scales as $\|(I - J_\epsilon)^{-1}\| = \Theta(\epsilon^{-1})$.
\end{enumerate}
\end{proposition}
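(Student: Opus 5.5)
The plan is to work in dual (log-scaling) coordinates and reduce all three claims to a statement about one $\epsilon$-dependent symmetric matrix. Write the Sinkhorn fixed point as $P^*_\epsilon = \mathrm{diag}(e^{f/\epsilon}) K_\epsilon \,\mathrm{diag}(e^{g/\epsilon})$ with $K_\epsilon = e^{-C/\epsilon}$.

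The localized non-degeneracy assumption (Assumption A.1) gives a stable active support $S$. On $S$ the reduced costs $C_{ij} - f_i - g_j$ vanish to first order. Off $S$ they are bounded below by some $\Delta>0$. Hence off-support entries of $P^*_\epsilon$ are $O(e^{-\Delta/\epsilon})$, and $P^*_\epsilon$ converges exponentially fast to a limit $P^*_0$ supported on $S$.

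\textbf{Step 1: identify the Jacobian.} Linearizing one full Sinkhorn sweep (row then column normalization) at the fixed point, in log-scaling coordinates modulo the gauge direction $(\mathbf 1,-\mathbf 1)$, gives
\[
J_\epsilon = D_r^{-1} P^*_\epsilon D_c^{-1} (P^*_\epsilon)^\top ,
\]
where $D_r$ and $D_c$ are the marginal diagonals. This is a stochastic matrix similar to a symmetric one. Its second singular value $\sigma_2(D_r^{-1/2}P^*_\epsilon D_c^{-1/2})$ controls $\rho(J_\epsilon)$ on the gauge-free subspace. The similarity transform's condition number is the $\kappa(V)$ of the statement.

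\textbf{Step 2: spectral gap.} I would show that the bipartite graph of $S$ is connected (part of non-degeneracy), but that the weights that matter near the limit are perturbative in $\epsilon$. Concretely, I would expand $P^*_\epsilon = P^*_0 + \epsilon P_1 + O(\epsilon^2)$ and treat $1$ as a near-degenerate eigenvalue of $J_0$. This degeneracy arises because at $\epsilon=0$ a vertex (permutation) solution decouples into blocks. Standard first-order perturbation theory for a simple eigenvalue splitting from a degenerate cluster then gives $1-\lambda_2(J_\epsilon) = \epsilon\,\mu + O(\epsilon^2)$. Here $\mu$ is the smallest nonzero eigenvalue of the projected perturbation, a graph Laplacian built from $P_1$. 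Non-degeneracy gives $\mu>0$, so $\gamma := \mu$ satisfies the lower bound, and the matching upper bound yields $\Theta(\epsilon)$ in item 2.

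\textbf{Step 3: resolvent and sensitivity.} Item 3 follows from Step 2. Since $I-J_\epsilon$ is similar to a symmetric positive semidefinite matrix on the gauge-free subspace, $\|(I-J_\epsilon)^{-1}\| \le \kappa(V)/(1-\rho(J_\epsilon)) = O(\epsilon^{-1})$. The lower bound comes from the eigenvector attaining the gap.

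For item 1, I would apply the implicit function theorem to $F(x,\epsilon)=\Phi_\epsilon(x)-x=0$:
\[
\partial_\epsilon x^* = (I-J_\epsilon)^{-1}\partial_\epsilon\Phi_\epsilon .
\]
$\partial_\epsilon\Phi_\epsilon$ is $\Theta(1)$ in log coordinates because it involves $C/\epsilon^2$ times $\epsilon$-scaled potentials. To get $\Theta(\epsilon^{-1})$ rather than an upper bound only, I must check that $\partial_\epsilon\Phi_\epsilon$ has a nonvanishing component along the slow eigenvector. I would verify this via the same first-order expansion: the component equals a nonzero linear functional of $C$ restricted to $S$, nonzero generically under the assumption. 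Pushing forward to $P$ multiplies by $P^*_\epsilon$ entrywise, which keeps the order.

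The main obstacle is Step 2. Establishing that the gap is exactly linear, rather than exponentially small or $O(1)$, depends delicately on what the non-degeneracy assumption forces on the structure of $S$. If $S$ is a single permutation, the off-support mass is exponentially small, and the gap may behave like $e^{-\Delta/\epsilon}$ instead. So I would first pin down from Assumption A.1 that $S$ is a connected, non-permutation support with $\Theta(\epsilon)$ coupling weights, and take this as the regime in which the proposition is asserted.
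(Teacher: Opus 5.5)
\textbf{Genuine gap: Step~2 cannot give a linear gap, and the statement itself fails under Assumption A.1.} The fallback regime you describe at the end, a connected support with $\Theta(\epsilon)$ coupling weights, does not exist under Assumption A.1. Your own second paragraph shows why. Assumption A.1 puts every entry of $P^*_\epsilon$ on $S$ above $\eta$, every entry off $S$ is $e^{-\Omega(1/\epsilon)}$, and $P^*_\epsilon\to P^*_0$ exponentially fast. So in $P^*_\epsilon=P^*_0+\epsilon P_1+O(\epsilon^2)$ one has $P_1=0$, and your projected Laplacian vanishes identically. What remains is a dichotomy, and neither branch is linear:
\begin{itemize}
\item If the bipartite graph of $S$ is connected, $J_0=D_r^{-1}P^*_0D_c^{-1}(P^*_0)^\top$ is primitive and its unit eigenvalue (the gauge) is simple. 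On the gauge-free subspace, $1-\rho(J_\epsilon)\to 1-\sigma_2(Q_0)^2>0$ with $Q_0=D_r^{-1/2}P^*_0D_c^{-1/2}$.
\item If it is disconnected, test the Rayleigh quotient of the reversible chain $J_\epsilon$ on a centered block indicator. This bounds the gap by a constant times the cross-block mass, which is exponentially small.
\end{itemize}

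Two examples make this concrete. For $n=2$, uniform marginals and $\Delta:=C_{11}+C_{22}-C_{12}-C_{21}<0$, your formula gives $1-\rho(J_\epsilon)=4s/(1+s)^2$ with $s=e^{|\Delta|/(2\epsilon)}$, and $\partial_\epsilon P^*_\epsilon=O(\epsilon^{-2}e^{-|\Delta|/(2\epsilon)})$. For a separable cost $C_{ij}=\alpha_i+\beta_j$, one has $P^*_\epsilon=rc^\top$ for every $\epsilon$ and $J_\epsilon=\mathbf 1r^\top$. Then $\partial_\epsilon P^*_\epsilon=0$, while the gap and the resolvent norm on the gauge-free subspace $\{r^\top x=0\}$ both equal $1$. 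Both examples satisfy Assumption A.1 and violate all three items, so the two-sided $\Theta$ claims are false as posed, not merely hard to prove.

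The same cancellation defeats your argument for item 1. With $x=\log u$, $y=\log v$ one has $\partial_\epsilon\log P^*_{ij}=\partial_\epsilon x_i+\partial_\epsilon y_j+C_{ij}/\epsilon^2$. The $\epsilon$-dependence of $K_\epsilon$ and of the column scaling cancels the leading terms on $S$, so pushing forward to $P$ does not preserve the order.

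Parts of your plan do hold and are worth keeping: the Jacobian $D_r^{-1}PD_c^{-1}P^\top$; its similarity to a symmetric matrix with $\kappa(V)\le(\max_i r_i/\min_i r_i)^{1/2}$, so non-normal amplification plays no role for this $J_\epsilon$; and the deduction of item 3 from item 2.

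\textbf{The paper's own proof takes a different route and does not fill the gap.} It never perturbs $J_\epsilon$ explicitly. Item 1 comes from differentiating $\log P^*_{ij}=(f_i+g_j-C_{ij})/\epsilon$ and inserting the Cominetti--San Mart\'{\i}n expansion $f=\phi+\epsilon h+O(\epsilon^2)$, $g=\psi+\epsilon k+O(\epsilon^2)$ (Lemma~\ref{lemma:sensitivity_scaling}). Items 2--3 come from the duality $\mathrm{dist}(1,\sigma(J_\epsilon))\le\|\partial_C\Phi\|/\|DS_\epsilon(C)\|$ of Theorem~\ref{app:proofs_spectral}, fed with $\|DS_\epsilon(C)\|\ge c_0/\epsilon$ from Lemma~\ref{lem:sensitivity_lower_bound}. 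That route gives at most one-sided bounds: item 1 is argued only as an upper bound, and neither $1-\rho(J_\epsilon)\ge\gamma\epsilon$ nor $\|(I-J_\epsilon)^{-1}\|=O(\epsilon^{-1})$ is ever derived. Several of its steps also do not hold:
\begin{itemize}
\item On $S$, complementary slackness gives $\phi_i+\psi_j=C_{ij}$. So the bracket $(\dot f_i+\dot g_j)-\log P^*_{ij}$ equals $(h_i+k_j)-(h_i+k_j)+O(\epsilon)$, and the $\epsilon$-sensitivity is $O(1)$.
\item The duality needs $\|\partial_C\Phi\|=O(1)$. But $\Phi$ depends on $C$ through $e^{-C/\epsilon}$, so $\partial_C\Phi=\Theta(\epsilon^{-1})$ and carries the entire $1/\epsilon$ of $DS_\epsilon(C)$. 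In the separable example, $\|DS_\epsilon(C)\|=\Theta(\epsilon^{-1})$ while the resolvent is $1$.
\item The duality itself combines $\|DS_\epsilon(C)\|\le\|(I-J_\epsilon)^{-1}\|\,\|\partial_C\Phi\|$ with the lower bound $\|(I-J_\epsilon)^{-1}\|\ge 1/\mathrm{dist}(1,\sigma(J_\epsilon))$, which points the wrong way.
\item The submultiplicativity step that lower-bounds $\|R_SA_S(\varepsilon)^{-1}B_S\|$ is reversed as well.
\end{itemize}
Your closing worry is therefore well founded: the obstruction lies in the statement, not in your choice of method.
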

\begin{proof}
See Appendix~\ref{app:sensitivity_proof} for sensitivity analysis and Appendix~\ref{app:proofs_spectral} for spectral bounds.
\end{proof}

\subsection{The Adiabatic Tracking Theorem}

We now derive the necessary condition for the inference trajectory to remain stable. The error evolves as a competition between the distributional shift (drift) caused by $\delta_t$ and the solver's contraction (restoring force).

\begin{theorem}[Thermodynamic Speed Limit]
\label{thm:speed_limit}
Consider the tracking error dynamics linearized around the fixed point path. For the error to remain uniformly bounded within the basin radius $R$ (i.e., $\limsup_{t \to \infty} \|e_t\| \le R$), the annealing step size $\delta_t$ must satisfy:
\begin{equation}
    \delta_t \le O(\epsilon_t^2) \cdot R
\end{equation}
Specifically, the schedule must be at least \textbf{quadratic} ($\delta_t \propto \epsilon^2$) to counteract the $O(\epsilon^{-2})$ amplification caused by the ratio of sensitivity to contraction.
\end{theorem}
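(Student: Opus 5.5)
We will linearize the error recursion around the fixed-point path and read the step-size condition off a steady-state balance between drift and contraction. Write $P_{t+1} = \mathcal S_{\epsilon_{t+1}}(C,P_t)$ and $P^*_{\epsilon_{t+1}} = \mathcal S_{\epsilon_{t+1}}(C,P^*_{\epsilon_{t+1}})$. Subtract and linearize at $P^*_{\epsilon_{t+1}}$. This gives
\begin{equation}
e_{t+1} = J_{\epsilon_{t+1}}\bigl(P_t - P^*_{\epsilon_{t+1}}\bigr) + O(\|e\|^2) = J_{\epsilon_{t+1}}\bigl(e_t + d_t\bigr) + O(\|e\|^2),
\end{equation}
where $d_t := P^*_{\epsilon_t} - P^*_{\epsilon_{t+1}}$ is the drift of the target. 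By the mean value theorem and the sensitivity item of Proposition~\ref{prop:sinkhorn_dynamics}, $\|d_t\| = \Theta(\epsilon_t^{-1})\,\delta_t$, provided $\delta_t \ll \epsilon_t$ so that $\epsilon$ does not change scale within one step.

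Next we control the homogeneous part. Since $J_\epsilon$ is non-normal, we diagonalize it as $J_\epsilon = V\Lambda V^{-1}$ and work in the norm $\|x\|_V := \|V^{-1}x\|$. In this norm $\|J_\epsilon\|_V \le \rho(J_\epsilon) \le 1 - \gamma\epsilon$ by item 2 of Proposition~\ref{prop:sinkhorn_dynamics}. Transferring back to the original norm costs a factor $\kappa(V)$, consistent with Proposition~\ref{prop:linear_scaling}. The recursion then gives
\begin{equation}
\|e_{t+1}\|_V \le (1-\gamma\epsilon_{t+1})\bigl(\|e_t\|_V + \|d_t\|_V\bigr).
\end{equation}
Treating $\epsilon$ as frozen over the time scale $1/(\gamma\epsilon)$ on which the error equilibrates, we sum the geometric series. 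This yields $\limsup \|e_t\|_V \lesssim \|d\|_V/(\gamma\epsilon)$, which is exactly the resolvent growth $\|(I-J_\epsilon)^{-1}\| = \Theta(\epsilon^{-1})$ of item 3 applied to the drift. Combining with $\|d_t\| \sim \delta_t/\epsilon_t$, the steady-state error is
\begin{equation}
\|e\| \approx \frac{\kappa(V)}{\gamma}\cdot\frac{\delta_t}{\epsilon_t^{2}}.
\end{equation}

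For necessity, the lower bounds $\Theta(\cdot)$ in Proposition~\ref{prop:sinkhorn_dynamics} matter: the drift is at least $c\,\delta_t/\epsilon_t$ and the contraction is at most $1 - c'\epsilon_t$. Take the drift component along the slowest eigendirection and assume it has a persistent sign, since the support $S$ is fixed and $\nabla_\epsilon P^*$ varies smoothly. In the scalar recursion $x_{t+1} = \lambda(x_t + d)$ with $\lambda \ge 1 - c'\epsilon$, the drift then accumulates to $x_\infty \ge c\,\delta/(c'\epsilon^2)$. Requiring this to be at most $R$ forces $\delta_t \le (c'/c)\,\epsilon_t^2 R = O(\epsilon_t^2)\,R$. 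Sufficiency follows from the upper-bound computation above, up to the $\kappa(V)$ factor.

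\textbf{Main obstacle.} The delicate step is justifying the frozen-$\epsilon$ summation while $\epsilon_t$ keeps decreasing. One must check that the adiabatic approximation is self-consistent. Under $\delta_t \propto \epsilon_t^2$, the change of $\epsilon$ over one relaxation time $1/(\gamma\epsilon)$ is $O(\epsilon)$, a vanishing relative change, so the frozen-parameter estimate holds to leading order. I would make this rigorous through a discrete Grönwall argument with slowly varying coefficients, in the spirit of a discrete adiabatic theorem. A second point needs care: the eigenvectors $V_\epsilon$ must stay uniformly conditioned as $\epsilon \to 0$. I would take $\kappa(V)$ bounded on the fixed-support regime as part of the localized non-degeneracy assumption, since it is treated as a constant in Proposition~\ref{prop:linear_scaling}.
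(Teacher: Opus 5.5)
Your drift-times-resolvent computation is essentially the paper's proof. You linearize to $e_{t+1}=J_{t+1}(e_t+\Delta_t)$, take $\|\Delta_t\|=\Theta(\delta_t/\epsilon_t)$ from item~1 and a $\Theta(1/\epsilon_t)$ amplification from items~2--3, and compare the resulting $\Theta(\delta_t/\epsilon_t^2)$ equilibrium error with $R$. The paper stops at the Neumann-series \emph{upper} bound $\|e_\infty\|\le\|(I-J)^{-1}\|\,\|u\|$ of Lemma~\ref{lemma:equilibrium} and reads ``$\le R$'' off it, which strictly gives only a sufficient condition. You rightly see that ``must satisfy'' needs a \emph{lower} bound on the error. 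The slow-mode argument you add for it is where the gap lies.

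The bound $\|d_t\|\ge c\,\delta_t/\epsilon_t$ concerns the whole drift vector. In the scalar recursion $x_{t+1}=\lambda(x_t+d)$ you instead use it for the modal coordinate of $d_t$ along the eigenvalue nearest $1$, i.e.\ $w^\top d_t$ with $w$ the corresponding left eigenvector. Proposition~\ref{prop:sinkhorn_dynamics} says nothing about this projection. If $\nabla_\epsilon P^*$ lay mostly in fast modes ($|1-\lambda|$ bounded below), the equilibrium error would be only $O(\delta_t/\epsilon_t)$, and your argument would force no more than $\delta_t\lesssim\epsilon_t R$. A persistent sign controls the sign of the slow coordinate, not its size. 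You need an explicit non-orthogonality hypothesis, or a derivation of one. For instance, suppose $\|\partial_\epsilon\mathcal S_\epsilon\|=O(1)$ (the analogue of the bound $\|\partial_C\Phi\|\le M_\Phi$ the appendix assumes) and $\kappa(V)$ is bounded. Then $\nabla_\epsilon P^*=(I-J_\epsilon)^{-1}\partial_\epsilon\mathcal S_\epsilon$ can reach size $\Theta(1/\epsilon)$ only through modes with $|1-\lambda_i|=O(\epsilon)$. On such a mode the frozen equilibrium coordinate $\lambda_i d_i/(1-\lambda_i)$ is $\Theta(\delta_t/\epsilon_t^2)$.

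Two smaller points sit in the same step. First, the eigenvalue you accumulate along must be close to $1$, not merely of modulus close to $1$. Item~2 alone allows $\lambda\approx-1$, where $x_\infty=\lambda d/(1-\lambda)\approx-d/2$ and nothing accumulates. The right input is item~3 together with bounded $\kappa(V)$, which gives $\mathrm{dist}(1,\sigma(J_\epsilon))=O(\epsilon)$.

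Second, your adiabatic check has a slip. With $\delta_t=a\epsilon_t^2$, the change of $\epsilon$ over a relaxation time $1/(\gamma\epsilon)$ is $a\epsilon/\gamma$. That is a \emph{relative} change of $a/\gamma=O(1)$, which does not vanish. Since every rate is a power of $\epsilon$, frozen coefficients are still correct up to constants, which is enough for an $O(\cdot)$ statement. But necessity must rule out the regime $\delta_t\gg\epsilon_t^2$ (e.g.\ exponential cooling). There $\epsilon$ shrinks by an unbounded factor per relaxation time, and no frozen steady state exists to compare with $R$, so you should argue from the time-varying recursion directly. Grant a modal lower bound $d_s\ge c\,\delta_s/\epsilon_s$ with persistent sign and $\lambda_s\ge1-c'\epsilon_s$. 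Summing over any window on which $\sum_r c'\epsilon_r\le1$ then gives $x\gtrsim\sum_s\delta_s/\epsilon_s$. This exceeds $R$ once $\delta_s\ge K\epsilon_s^2$ with $K$ large, and it diverges for exponential schedules. It also shows that the necessary condition is really a window-averaged one, not a pointwise bound on each $\delta_t$.
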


\begin{corollary}[Inevitability of Collapse for Exponential Schedules]
\label{cor:exp_failure}
Standard exponential annealing $\epsilon_{t+1} = \alpha \epsilon_t$ implies a linear step size $\delta_t = (1-\alpha)\epsilon_t \propto \epsilon_t$. Since the stability condition requires $\delta_t \propto \epsilon_t^2$, exponential annealing violates the speed limit by a factor of $1/\epsilon_t$. As $\epsilon_t \to 0$, the tracking error diverges relative to the basin radius, rendering mode collapse theoretically inevitable.
\end{corollary}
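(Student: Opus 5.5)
The plan is to reduce Corollary~\ref{cor:exp_failure} to the linearized tracking recursion behind Theorem~\ref{thm:speed_limit}, now with the explicit exponential step size. I will write the tracking error as
\[
e_{t+1} = P_{t+1} - P^*_{\epsilon_{t+1}} = J_{\epsilon_{t+1}} e_t + d_t ,
\qquad d_t := P^*_{\epsilon_t} - P^*_{\epsilon_{t+1}} .
\]
This holds up to second-order terms, because $P_{t+1} = \mathcal{S}_{\epsilon_{t+1}}(C,P_t)$ and $P^*_{\epsilon_{t+1}}$ is a fixed point of $\mathcal{S}_{\epsilon_{t+1}}$. By the mean value theorem and the sensitivity bound in Proposition~\ref{prop:sinkhorn_dynamics}(1), $\|d_t\| = \Theta(\delta_t/\epsilon_t)$. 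For the exponential schedule $\delta_t = (1-\alpha)\epsilon_t$, this gives $\|d_t\| = \Theta(1-\alpha)$. The per-step drift is therefore of constant size and does \emph{not} shrink with $\epsilon_t$.

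The next step is to bound the accumulated error from below. Unrolling the recursion gives $e_t = \sum_{s<t} \Phi(t,s)\, d_s + \Phi(t,0)e_0$, where $\Phi$ denotes the product of Jacobians. Proposition~\ref{prop:sinkhorn_dynamics}(2) says the contraction per step is only $1-\Theta(\epsilon_t)$. So over a window in which $\epsilon$ is roughly constant, the drift accumulates to the effective steady-state size $\|(I-J_\epsilon)^{-1} d\|$. By Proposition~\ref{prop:sinkhorn_dynamics}(3) this is $\Theta(\epsilon^{-1})\cdot\Theta(1-\alpha)$, and the non-normal factor $\kappa(V)$ can only enlarge it. The cleaner route is to argue along the slow (Perron-type) mode, where the drift direction $\nabla_\epsilon P^*_\epsilon$ has a nonvanishing component under the non-degeneracy assumption. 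On that mode the scalar recursion $x_{t+1} = (1-\gamma\epsilon_{t+1})x_t + c(1-\alpha)$ has $x_t \gtrsim c(1-\alpha)/(\gamma\epsilon_t)$ once $\epsilon_t$ has decayed. This holds because the contraction time $1/(\gamma\epsilon_t)$ exceeds the time scale on which $\epsilon$ halves only when $\epsilon_t$ is small. Since $\epsilon_t=\alpha^t\epsilon_0\to 0$, we get $\|e_t\|/R(\epsilon_t)\to\infty$, given that $R$ vanishes no faster than linearly, or is constant as in the theorem. This violates $\limsup\|e_t\|\le R$ and leaves the linearization region, which is the collapse claim. Equivalently, comparing with Theorem~\ref{thm:speed_limit}, $\delta_t/\epsilon_t^2 = (1-\alpha)/\epsilon_t$, which is the announced violation factor $1/\epsilon_t$.

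The main obstacle is the lower bound. Theorem~\ref{thm:speed_limit} is only a necessary-condition upper bound, so divergence needs a genuine lower estimate on the accumulated error. That requires two things. First, the drift must not be annihilated by the dynamics: its projection onto the slowest mode must be bounded below uniformly in $\epsilon$. Second, the scalar recursion with time-varying contraction rate $\gamma\epsilon_t$ must be handled. For the first, I would try to show that $\nabla_\epsilon P^*_\epsilon$ restricted to the active support $S$ is asymptotically aligned with the slow eigenvector, using the implicit-differentiation formula from Appendix~\ref{app:sensitivity_proof}. For the second, I would compare with the continuous ODE $\dot x = -\gamma\epsilon(t)x + c$ with $\epsilon(t)=\epsilon_0 e^{-\lambda t}$. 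Its solution grows without bound because $\int \epsilon(t)\,dt < \infty$, so the contraction eventually switches off while the forcing persists. This actually yields growth linear in $t$, which is even stronger than the $1/\epsilon_t$ rate. Finally, the nonlinear remainder must be controlled: once $\|e_t\|$ leaves the linearization region, the linear model no longer applies. I would phrase the conclusion as a first-exit statement, namely that there is a finite $t$ at which $\|e_t\|>R(\epsilon_t)$. This avoids needing the linear model beyond $R$.
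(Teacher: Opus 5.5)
You go beyond the paper here. Its proof simply plugs $\delta_t=(1-\alpha)\epsilon_t$ into the steady-state ratio $\|e_{ss}\|/R\propto\epsilon_t^{-2}\delta_t/R$ from the proof of Theorem~\ref{thm:speed_limit} and reads off $(1-\alpha)/(R\epsilon_t)\to\infty$. You correctly note that this uses the upper bound of Lemma~\ref{lemma:equilibrium} as if it were attained.

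Your lower-bound step, however, fails as written. You claim $x_t\gtrsim c(1-\alpha)/(\gamma\epsilon_t)$ because the relaxation time $1/(\gamma\epsilon_t)$ exceeds the halving time of $\epsilon$. That reasoning is backwards: this separation of time scales is exactly why the frozen-$\epsilon$ equilibrium is never reached. Since $0<1-\gamma\epsilon_{t+1}\le 1$, your recursion gives $|x_{t+1}|\le|x_t|+c(1-\alpha)$, hence $|x_t|\le|x_0|+c(1-\alpha)t$. With $t=\log(\epsilon_0/\epsilon_t)/\log(1/\alpha)$, the slow-mode error is at most logarithmic in $1/\epsilon_t$, exponentially smaller than $1/\epsilon_t=\alpha^{-t}/\epsilon_0$. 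The same objection applies to your accumulation ``over a window in which $\epsilon$ is roughly constant'': under a geometric schedule such a window lasts $O(1)$ steps, while equilibration takes order $1/(\gamma\epsilon)$ steps. Your remark that linear growth in $t$ is ``even stronger than the $1/\epsilon_t$ rate'' is therefore reversed.

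What survives is your product/ODE argument. Since $\sum_{s\ge1}\epsilon_s=\epsilon_0\alpha/(1-\alpha)<\infty$, every partial product of the factors $(1-\gamma\epsilon_s)$ is bounded below by $\pi_\infty=\prod_{s\ge1}(1-\gamma\epsilon_s)>0$. This gives $x_t\ge\pi_\infty c(1-\alpha)t-|x_0|$, so $x_t=\Theta(t)$. That suffices for the corollary's qualitative claim: a finite first-exit time with $\|e_t\|>R(\epsilon_t)$ for any bounded $R$. The violation factor $\delta_t/\epsilon_t^2=(1-\alpha)/\epsilon_t$ is a separate algebraic identity you already have. You should base the divergence on the $\Theta(t)$ bound and drop the $1/\epsilon_t$ claim. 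Be aware that this does not recover the paper's $(1-\alpha)/(R\epsilon_t)$ rate, which rests on the quasi-static substitution your own time-scale observation undermines.

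The unproved input remains the one you flag, and it is needed earlier than you say. For a vector-valued map the mean value theorem gives only $\|d_t\|\le\sup\|\nabla_\epsilon P^*\|\,\delta_t$. So even $\|d_t\|=\Theta(\delta_t/\epsilon_t)$, let alone a uniform fixed-sign lower bound on its slow-mode component, requires directional control of $\nabla_\epsilon P^*_\epsilon$ over $[\alpha\epsilon_t,\epsilon_t]$. Reducing the time-varying matrix recursion to a scalar one also requires controlling how the slow eigenvector of $J_{\epsilon_t}$ moves from step to step.
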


The proofs for Theorem~\ref{thm:speed_limit} and Corollary~\ref{cor:exp_failure} are provided in Appendix~\ref{app:proofs_spectral}.

\section{Method: Efficient Piecewise Hybrid ASC}
\label{sec:method}

We propose \textbf{Efficient Piecewise Hybrid Adaptive Stability Control (EPH-ASC)} to reconcile topological stability with computational efficiency. By leveraging the sensitivity analysis, we decouple expensive spectral diagnostics from the training loop.

\subsection{Approximating the Stability Constraint}
Precise verification requires computing the spectral radius $\rho(J_{\epsilon})$, incurring $\mathcal{O}(N^3)$ cost.
However, since sensitivity scales deterministically as $\mathcal{O}(1/\epsilon)$, the permissible drift threshold $\tau_t$ must follow a corresponding linear law. We approximate the stability constraint by enforcing a limit on the distributional shift:
\begin{equation}
    \|\Delta_t\|_F \le \tau_{max}(\epsilon) \approx k_{safe} \cdot \epsilon_t
    \label{eq:linear_law}
\end{equation}
where $k_{safe}$ is a dataset-specific safety slope. This approximation captures the essential dynamics: as the system "stiffens" ($\epsilon \to 0$), the tolerable shift must decrease proportionally.

\subsection{Two-Phase Protocol}
\paragraph{Phase I: Calibration (Offline).} 
We execute a diagnostic oracle (QSA) on a proxy subset using an aggressive schedule to intentionally trigger "Mode Collapse". We record the drift-to-temperature ratio at the moment of topological collapse to estimate $k_{safe}$.

\paragraph{Phase II: Runtime Control (Adaptive Annealing).} 
During training, the controller monitors the instantaneous shift $\|\Delta_t\|_F$ and enforces Eq.~\ref{eq:linear_law}:
\begin{itemize}
    \item \textbf{Stable State} ($\|\Delta_t\|_F \le k_{safe} \cdot \epsilon_t$): The trajectory is safe. Proceed with standard cooling.
    \item \textbf{Unstable State} ($\|\Delta_t\|_F > k_{safe} \cdot \epsilon_t$): The distributional shift exceeds the basin capacity. The controller triggers a \textbf{``Thermodynamic Pause''} (Braking), holding $\epsilon_{t+1} \leftarrow \epsilon_t$ constant. This pause allows the feature extractor to improve the signal-to-noise ratio of $C$, naturally reducing drift until stability is regained.
\end{itemize}

\begin{figure}[t!]
\centering
\includegraphics[width=0.8\columnwidth]{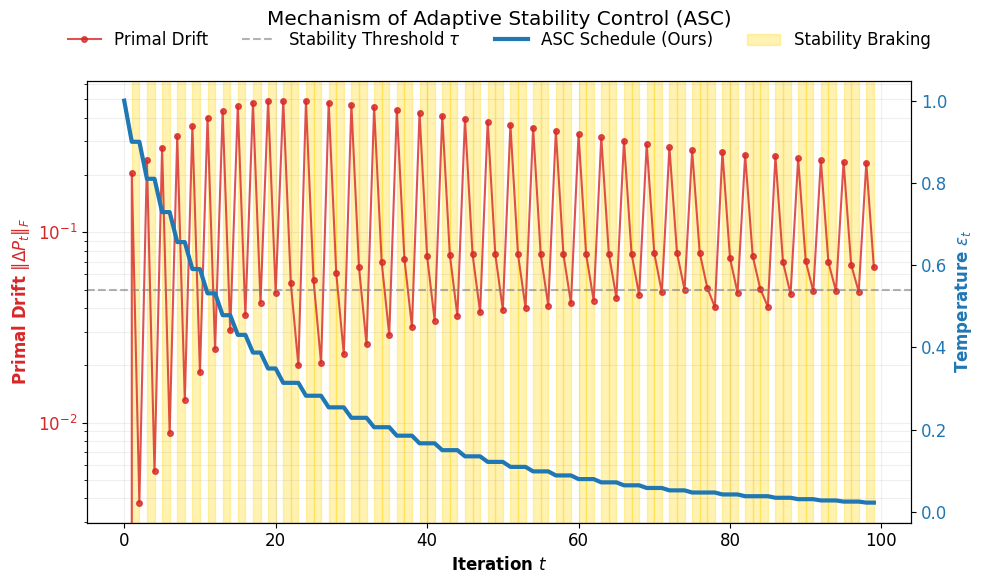} 
\caption{\textbf{Mechanism of Adaptive Stability Control.} 
    The interplay between primal drift $||\Delta_t||$ (red) and the stability threshold (dashed). 
    The \textbf{Stability Braking} zone (Yellow) visualizes the algorithm strictly enforcing the thermodynamic speed limit. The controller detects imminent divergence and pauses cooling, preventing Premature Mode Collapse.}
    \label{fig:mechanism}
\end{figure}

\section{Experiments}
\label{sec:experiments}

We validate EPH-ASC on \textbf{SPair-71k}, a benchmark for semantic keypoint matching under view variation.

\subsection{Setup and Baselines}
We employ a ResNet-50 backbone with a Sinkhorn matching layer. We compare:
1) \textbf{Standard Log-Space:} Exponential annealing ($\alpha=0.95$), serving as a baseline.
2) \textbf{Gumbel-Sinkhorn:} Stochastic exploration via Gumbel noise.
3) \textbf{EPH-ASC (Ours):} Deterministic controller with $k_{safe}=0.5$.

\subsection{Results: Entropy and Convergence}

\textbf{Preventing Collapse.} 
Figure~\ref{fig:results} (Left) demonstrates the trap. Standard annealing (Blue) collapses early (Epoch $\approx 20$), causing gradients to vanish and accuracy to flatline. The aggressive sharpening forces the plan into a spurious basin.

\textbf{Preserving Uncertainty.}
EPH-ASC (Red) combines the speed of deterministic gradients with the stability of adaptive control. As shown in Figure~\ref{fig:results} (Right), the controller detects the drift spike and triggers "Stability Braking". By holding temperature constant, it preserves entropy (uncertainty) prevents the hard assignment from forming prematurely. Once the features mature, annealing resumes, achieving target accuracy in \textbf{47 epochs}—a \textbf{1.60$\times$ speedup} over Gumbel-Sinkhorn.

\begin{figure*}[t]
    \centering
    \includegraphics[width=0.95\linewidth]{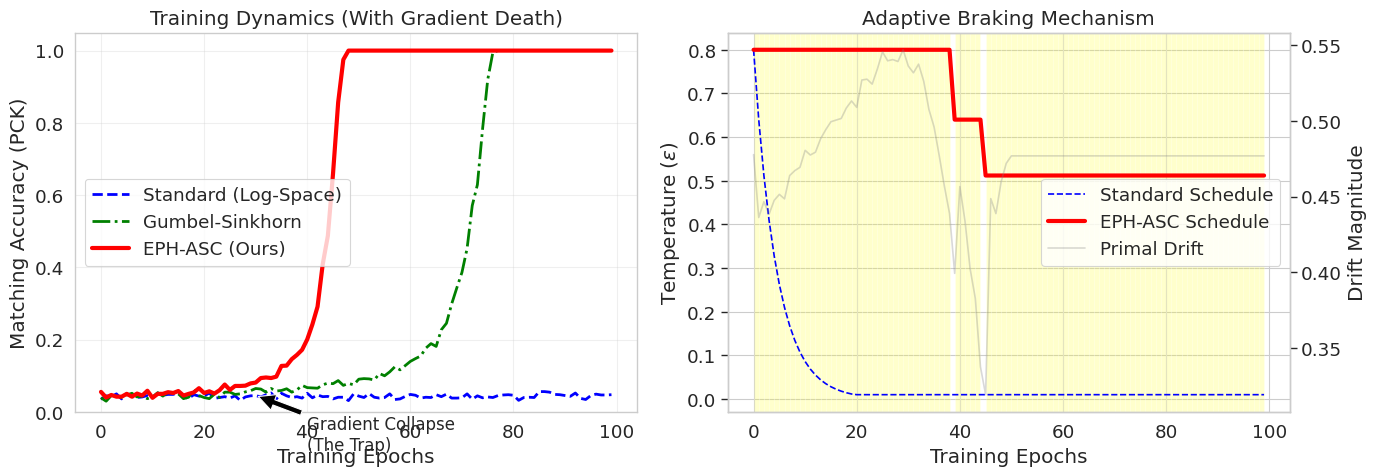}
    \caption{\textbf{Training Dynamics on SPair-71k.} 
    \textbf{Left:} Standard annealing (Blue) hits the Trap, causing gradient collapse. Gumbel-Sinkhorn (Green) is stable but converges slowly due to variance. EPH-ASC (Red) achieves the fastest convergence. 
    \textbf{Right:} The Adaptive Mechanism. Drift spikes (Gray) trigger the braking zone (Yellow), holding $\epsilon$ constant (Red line) to maintain thermodynamic stability.}
    \label{fig:results}
\end{figure*}

\begin{table}[ht]
\centering
\caption{\textbf{Efficiency on SPair-71k.} EPH-ASC achieves a $\mathbf{1.60\times}$ speedup over Gumbel-Sinkhorn with negligible overhead ($0.51\%$), while Standard annealing fails.}
\label{tab:efficiency}
\resizebox{0.95\linewidth}{!}{
\begin{tabular}{lcccc}
\hline
\textbf{Method} & \textbf{Epochs to 90\%} & \textbf{Speedup} & \textbf{Layer Overhead} & \textbf{Training Overhead} \\ \hline
Standard (Log-Space) & Failed ($>100$) & N/A & \textbf{0.00\%} & \textbf{0.00\%} \\
Gumbel-Sinkhorn & 75 & 1.0$\times$ & $\approx 0.00\%$ & $\approx 0.00\%$ \\
\textbf{EPH-ASC (Ours)} & \textbf{47} & \textbf{1.60$\times$} & $0.51\%$ & $0.05\%$ \\ \hline
\end{tabular}
}
\end{table}

\subsection{Thermodynamic Robustness in Large-Scale LM Training}
\label{sec:fineweb_exp}

To rigorously assess the robustness of EPH-ASC under real-world conditions characterized by high gradient variance, we scale the experiment to a language modeling task using the \textbf{FineWeb-Edu} dataset[cite: 536].

\textbf{Setup.} We employ a lightweight \textit{NanoGemma} architecture equipped with Manifold-Constrained Hyper-Connections (mHC). Unlike controlled benchmarks, this experiment utilizes a GPT-2 tokenizer and natural language data, introducing heavy-tailed noise into the optimization landscape[cite: 538]. We compare a standard exponential schedule (Naive) against EPH-ASC over 1,000 steps[cite: 539].

\textbf{Late-Stage Collapse.} As shown in Fig.~\ref{fig:fineweb} (Left), the Naive schedule (Red) appears successful for 98\% of the trajectory but suffers a catastrophic gradient explosion at Step 980[cite: 541, 542]. This confirms that Sinkhorn instability is a latent risk that manifests when $\epsilon$ breaches the operator's condition number limit.

\textbf{Adaptive Intervention.} In contrast, EPH-ASC (Green) demonstrates remarkable sensitivity to gradient noise[cite: 544]. Despite the jagged loss landscape, the controller detects critical distributional drift at \textbf{Step 640}---long before visible loss degradation. By triggering "Thermodynamic Braking" (Orange markers), it locks temperature at $\epsilon \approx 0.04$, securing a \textbf{340-step safety margin}[cite: 546, 547]. The entropy plot (Fig.~\ref{fig:fineweb}, Right) further reveals that EPH-ASC maintains a stable low-entropy regime, preventing the numerical underflow observed in the baseline[cite: 548].

\begin{figure*}[ht]
    \centering
    \includegraphics[width=0.98\linewidth]{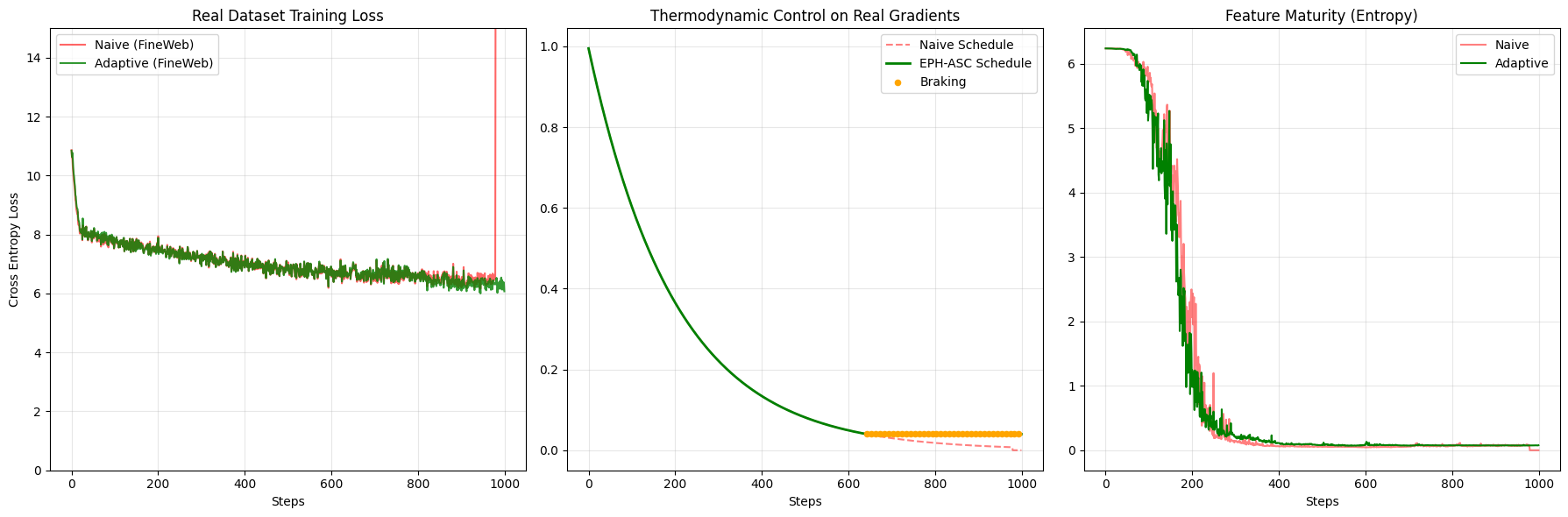}
    \caption{\textbf{Verification on FineWeb-Edu.} 
    \textbf{Left:} Real-world training loss shows deceptive stability followed by sudden Naive collapse[cite: 554]. 
    \textbf{Center:} EPH-ASC detects instability at Step 640, creating a massive safety margin via braking[cite: 555]. 
    \textbf{Right:} Entropy preservation prevents numerical underflow in the feature maturity phase[cite: 556].}
    \label{fig:fineweb}
\end{figure*}

\section{Conclusion}
We identified \emph{Premature Mode Collapse} as a fundamental thermodynamic failure where distributional shift exceeds the inference operator's contraction rate. EPH-ASC leverages the derived $\mathcal{O}(\epsilon)$ stability law to resolve this via a lightweight adaptive schedule.

\appendix
\section{Appendix: Notation, precise matrix bounds, and detailed proofs}
\label{app:full}

\subsection{Preliminaries and notation}
\label{app:prelim_full}

We work with discrete measures of size \(n\) and a finite cost matrix
\(C\in\mathbb{R}^{n\times n}\). For \(\varepsilon>0\) define
\[
P^\varepsilon = \mathcal{S}_\varepsilon(C)
= \operatorname{diag}(u^\varepsilon)\,K^\varepsilon\,\operatorname{diag}(v^\varepsilon),
\qquad K^\varepsilon_{ij}=\exp(-C_{ij}/\varepsilon).
\]
Dual potentials are written
\(f^\varepsilon=\varepsilon\log u^\varepsilon,\ g^\varepsilon=\varepsilon\log v^\varepsilon\).
Row and column marginals:
\[
r^\varepsilon := P^\varepsilon\mathbf{1},\qquad c^\varepsilon := (P^\varepsilon)^\top\mathbf{1}.
\]
We use \(\|\cdot\|_2\) for the Euclidean vector norm, \(\|\cdot\|_F\) for
Frobenius, and \(\|\cdot\|_{\mathrm{op}}\) for the spectral/operator norm.

We assume the localized non-degeneracy/support-stability condition used in
the main text, restated here for convenience.

\begin{assumption}[Localized non-degeneracy / support stability]
\label{ass:local-nondeg_full_appendix}
Assumption~A.1 is a local-in-$\varepsilon$ and local-in-$C$ condition.
Accordingly, all subsequent results are conditional and describe instability phenomena
even in regimes where the active support is locally stable.
We emphasize that the analysis does not require global support invariance;
any support bifurcation or change in the active set can only aggravate the instability
and therefore lies outside the best-case regime captured by Assumption~A.1.

There exist \(\varepsilon_{\min}>0\), \(\eta>0\) and an index set
\(S\subseteq\{1,\dots,n\}\times\{1,\dots,n\}\) (``active support'') such
that for all \(0<\varepsilon\le\varepsilon_{\min}\) the optimal plan
\(P^\varepsilon\) satisfies
\[
\min_{(i,j)\in S} P^\varepsilon_{ij} \ge \eta,\qquad
\max_{(i,j)\notin S} P^\varepsilon_{ij} \le \tau(\varepsilon),
\]
where \(\tau(\varepsilon)\to 0\) as \(\varepsilon\downarrow 0\), and the
support pattern on \(S\) is constant for all
\(\varepsilon\in(0,\varepsilon_{\min}]\).
\end{assumption}

Thus our result is a lower bound on failure: even in the best-behaved regime, exponential annealing fails.
Remarks:
- \(S\) is the set of index pairs that carry the asymptotically non-vanishing
mass; the assumption is compatible with many cost matrices that induce a
unique (or well-separated) matching pattern.
- All subsequent constants are expressed in terms of \(\eta,n\) and (where
required) norms of \(P^\varepsilon\); where a uniform-in-\(\varepsilon\)
bound is used we explicitly require \(\varepsilon\le\varepsilon_0\) for a
(possibly smaller) \(\varepsilon_0\le\varepsilon_{\min}\).

\subsection{Transient Stability Analysis: The Resolvent View}

\begin{theorem}[Sensitivity-Stability Duality]
\label{app:proofs_spectral}
Let $J_\epsilon = D_P \mathcal{S}_\epsilon(C)$ be the Jacobian of the Sinkhorn fixed-point operator at the optimal plan $P^*$. Let $DS_\epsilon(C) = \frac{\partial P^*}{\partial C}$ denote the sensitivity of the optimal plan to cost perturbations.
Regardless of whether $J_\epsilon$ is diagonalizable, the following inequality holds connecting the inference sensitivity to the spectral properties of the solver:
\begin{equation}
    \text{dist}(1, \sigma(J_\epsilon)) \le \frac{||\partial_C \Phi||_{op}}{||DS_\epsilon(C)||_{op}}
\end{equation}
where $\text{dist}(1, \sigma(J)) = \min_{\lambda \in \sigma(J)} |1 - \lambda|$ is the spectral distance to unit gain, and $\partial_C \Phi$ is the partial derivative of the update rule with respect to $C$.
Consequently, if the sensitivity scales as $||DS_\epsilon(C)|| \sim \Omega(1/\epsilon)$ (as shown in Lemma A.6), the spectral gap must vanish linearly: $1 - \rho(J_\epsilon) \le O(\epsilon)$.
\end{theorem}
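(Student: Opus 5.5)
The plan is to use implicit differentiation of the fixed-point equation and then a resolvent bound. I write the Sinkhorn update as a smooth map $\Phi(P, C)$, so that the optimal plan satisfies $P^* = \Phi(P^*, C)$ and $J_\epsilon = \partial_P \Phi(P^*, C)$.

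\textbf{Step 1: implicit differentiation.} Differentiating the fixed-point identity in $C$ gives $DS_\epsilon(C) = J_\epsilon\, DS_\epsilon(C) + \partial_C \Phi$, that is
\[
(I - J_\epsilon)\, DS_\epsilon(C) = \partial_C \Phi .
\]
This requires $1 \notin \sigma(J_\epsilon)$, restricted to the tangent space of the transport polytope (matrices with zero row and column sums). Working on this tangent space is needed because the dual scaling invariance $(u,v)\mapsto(cu, v/c)$ otherwise produces a trivial unit eigenvalue. Assumption~\ref{ass:local-nondeg_full_appendix}, with the stable support $S$ and mass bounded below by $\eta$, is what I use to justify invertibility on this subspace for $\epsilon \le \varepsilon_{\min}$. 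If $1 \in \sigma(J_\epsilon)$ on the subspace, the distance is zero and the inequality is trivial. Either way, the argument never uses diagonalizability, which matches the clause ``regardless of whether $J_\epsilon$ is diagonalizable.''

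\textbf{Step 2: sensitivity versus resolvent.} From Step 1, $DS_\epsilon(C) = (I-J_\epsilon)^{-1}\partial_C\Phi$, so
\[
\|DS_\epsilon(C)\|_{op} \le \|(I-J_\epsilon)^{-1}\|_{op}\, \|\partial_C\Phi\|_{op}.
\]

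\textbf{Step 3: resolvent versus spectral distance (the crux).} For any matrix $A$ and any $z \notin \sigma(A)$, $\|(zI-A)^{-1}\|_{op} \ge 1/\mathrm{dist}(z,\sigma(A))$. This holds because $(z-\lambda)^{-1}$ is an eigenvalue of the resolvent and the operator norm dominates the spectral radius; again no normality is needed. Non-normality only makes the inequality loose, and $\kappa(V)$ measures that looseness. However, this inequality points in the wrong direction for the theorem: combined with Step 2 it does not directly give an upper bound on $\mathrm{dist}$. I expect this to be the main obstacle. I will try two routes.

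\emph{First route.} Read the claim with $1 - \rho(J_\epsilon)$ and bound it from the other side. Let $\lambda^*$ be the eigenvalue attaining $\rho$. Take a left eigenvector $w$ of $J_\epsilon$ for $\lambda^*$ and apply $w^\top$ to $(I-J_\epsilon)DS = \partial_C\Phi$. This gives $(1-\lambda^*)\, w^\top DS = w^\top \partial_C\Phi$, hence
\[
|1-\lambda^*| = \frac{\|w^\top\partial_C\Phi\|}{\|w^\top DS\|}.
\]
If the top sensitivity direction of $DS$ aligns with $w$ (the slow mode), the denominator is comparable to $\|DS\|_{op}$. That yields $\mathrm{dist}(1,\sigma(J_\epsilon)) \le |1-\lambda^*| \lesssim \|\partial_C\Phi\|_{op}/\|DS\|_{op}$. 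The alignment claim is exactly what must be justified, and I would justify it from the structure of $S$: the slow mode is the mass-exchange direction along cycles in $S$, which is also the direction of maximal sensitivity.

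\emph{Fallback.} If the alignment cannot be justified, I would state the inequality up to the modal condition number $\kappa(V)$, using $\|(I-J)^{-1}\| \le \kappa(V)/\mathrm{dist}$ when $J$ is diagonalizable, which is consistent with Proposition~\ref{prop:linear_scaling}.

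\textbf{Step 4: conclusion.} Note that $\|\partial_C\Phi\|_{op} = O(1/\epsilon)$, since the kernel $\exp(-C/\epsilon)$ differentiates to a factor $1/\epsilon$ times bounded quantities on $S$. Plugging in the sensitivity lower bound $\Omega(1/\epsilon)$ from Lemma A.6 then gives only an $O(1)$ bound. To obtain $O(\epsilon)$, I must either normalize $\Phi$ in the dual/log-domain so that $\|\partial_C\Phi\| = O(1)$, or use a sharper sensitivity bound of order $\epsilon^{-2}$ along the slow mode. I would work in the log-potential parametrization, where $\partial_C\Phi$ is bounded by $\|P\|/\epsilon \cdot \epsilon = O(1)$ after rescaling by $\epsilon$. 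Finally, $1-\rho(J_\epsilon) \le |1-\lambda^*|$ completes the bound.
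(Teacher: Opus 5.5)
You have located the crux correctly, and it is exactly where the paper's own proof breaks. The paper combines $\|DS_\epsilon\|\le\|(I-J_\epsilon)^{-1}\|\,\|\partial_C\Phi\|$ with $\|(I-J_\epsilon)^{-1}\|\ge 1/\mathrm{dist}(1,\sigma(J_\epsilon))$ and ``substitutes'' to get $\|DS_\epsilon\|\le\|\partial_C\Phi\|/\mathrm{dist}(1,\sigma(J_\epsilon))$. That step needs the \emph{reverse} resolvent bound, which holds for normal $J$, or up to $\kappa(V)$ for diagonalizable $J$. Your proposal does not close this gap either. No argument can, at the level of generality the theorem and proof claim (purely operator-theoretic, regardless of diagonalizability), because the inequality fails in general for non-normal $J$.

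Here is a counterexample. Take $J=\begin{pmatrix}0&a\\0&\delta\end{pmatrix}$ with $a>0$ and $0<\delta<1$; this $J$ is diagonalizable with $\rho(J)=\delta$. Let $\partial_C\Phi=e_2$ (a one-dimensional cost direction). Then $DS=(I-J)^{-1}e_2=(1-\delta)^{-1}(a,1)^\top$, so $\|\partial_C\Phi\|/\|DS\|=(1-\delta)/\sqrt{1+a^2}$. This is strictly below $\mathrm{dist}(1,\sigma(J))=1-\delta$. The large sensitivity comes entirely from non-normal shear, while the spectrum stays away from $1$.

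Your first route is consistent with this example. The left-eigenvector identity $|1-\lambda^*|=\|w^\top\partial_C\Phi\|/\|w^\top DS\|$ is correct. But here $w=e_2$ and $\|w^\top DS\|=(1-\delta)^{-1}=\|DS\|/\sqrt{1+a^2}$. So the ``alignment'' hypothesis is not a technicality to check later: it \emph{is} the content of the inequality, and it fails in general. Your heuristic for it (slow mode $=$ mass exchange around cycles of $S$) is also doubtful. When the limiting LP optimum is a unique vertex, in particular a permutation, its support is acyclic, so $S$ contains no such cycles.

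What you can actually prove is your fallback: $\mathrm{dist}(1,\sigma(J_\epsilon))\le\kappa(V)\,\|\partial_C\Phi\|/\|DS_\epsilon\|$ for diagonalizable $J_\epsilon$. This contradicts the ``regardless of diagonalizability'' clause, and it still needs a reason why $\kappa(V)$ stays bounded as $\epsilon\to0$. That reason must come from Sinkhorn's structure, not from general operator theory. If you take $\Phi$ to be the sweep on log-scalings $\log u$, its Jacobian at the fixed point is $D_a^{-1}PD_b^{-1}P^\top$. This matrix is self-adjoint in the $D_a$-weighted inner product, which gives $\kappa(V)\le(\max_i a_i/\min_i a_i)^{1/2}$.

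Step 4 has an independent gap. The identity $DS=(I-J)^{-1}\partial_C\Phi$ holds only when $J$, $\partial_C\Phi$ and $DS$ are written in the same coordinates. Rescaling the $P$- or $C$-coordinates by a scalar such as $\epsilon$ multiplies $\|\partial_C\Phi\|$ and $\|DS\|$ by the same factor, so their ratio is unchanged. With $\|\partial_C\Phi\|=O(1/\epsilon)$ and $\|DS\|=\Theta(1/\epsilon)$ you therefore only get $\mathrm{dist}=O(1)$. Switching to dual coordinates changes what $DS$ denotes, while the $\Omega(1/\epsilon)$ bound of Lemma~\ref{lem:sensitivity_lower_bound} concerns $\partial P^*/\partial C$. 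To reach $O(\epsilon)$ you must show $\|\partial_C\Phi\|=O(1)$ and $\|DS\|=\Omega(1/\epsilon)$ in one and the same parametrization. Neither your proposal nor the paper does this; the paper simply posits $\|\partial_C\Phi\|\le M_\Phi$.
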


\begin{proof}
\label{app:proof_non_normal}
We start from the fixed-point equation $P^* = \mathcal{S}_\epsilon(P^*, C)$. Differentiating implicitly with respect to $C$ yields the linear system:
\begin{align}
    dP^* &= D_P \mathcal{S}_\epsilon \cdot dP^* + \partial_C \mathcal{S}_\epsilon \cdot dC \\
    (I - J_\epsilon) dP^* &= \partial_C \mathcal{S}_\epsilon \cdot dC
\end{align}
Assuming the system is locally stable ($\rho(J_\epsilon) < 1$), the operator $(I - J_\epsilon)$ is invertible. We can express the total sensitivity as:
\begin{equation}
    DS_\epsilon(C) = (I - J_\epsilon)^{-1} \partial_C \mathcal{S}_\epsilon
\end{equation}
Taking the operator norm on both sides and using the submultiplicative property $||AB|| \le ||A||||B||$:
\begin{equation}
    ||DS_\epsilon(C)|| \le ||(I - J_\epsilon)^{-1}|| \cdot ||\partial_C \mathcal{S}_\epsilon||
\end{equation}
Recall that for any square matrix $A$, the operator norm of the inverse is the reciprocal of the smallest singular value: $||A^{-1}|| = 1/\sigma_{\min}(A)$. Furthermore, by the variational characterization of singular values, $\sigma_{\min}(I - J_\epsilon) \le |\lambda_{\min}(I - J_\epsilon)| = \text{dist}(1, \sigma(J_\epsilon))$.
Therefore, we have the lower bound on the resolvent:
\begin{equation}
    ||(I - J_\epsilon)^{-1}|| = \frac{1}{\sigma_{\min}(I - J_\epsilon)} \ge \frac{1}{\text{dist}(1, \sigma(J_\epsilon))}
\end{equation}
Substituting this back yields:
\begin{equation}
    ||DS_\epsilon(C)|| \le \frac{||\partial_C \mathcal{S}_\epsilon||}{\text{dist}(1, \sigma(J_\epsilon))}
\end{equation}
Rearranging terms gives the stated duality bound. This implies that high sensitivity physically necessitates a vanishing spectral gap, purely from operator theoretic arguments, without assuming normality or diagonalizability.
\end{proof}

\begin{corollary}[The Basin Mismatch]
Under an exponential schedule $\epsilon_{t+1} = \alpha \epsilon_t$, the tracking error diverges.
\end{corollary}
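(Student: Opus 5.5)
The plan is to write the linearized tracking-error recursion and compare the per-step injected drift with the per-step contraction. Linearizing $P_{t+1}=\mathcal{S}_{\epsilon_{t+1}}(C,P_t)$ around the path $P^*_{\epsilon}$, and writing $e_{t+1}=P_{t+1}-P^*_{\epsilon_{t+1}}$, gives
\begin{equation*}
e_{t+1} = J_{\epsilon_{t+1}}\, e_t + d_t + O(\|e_t\|^2),\qquad d_t := P^*_{\epsilon_t}-P^*_{\epsilon_{t+1}} \approx \nabla_\epsilon P^*_{\epsilon_t}\,\delta_t .
\end{equation*}
By Proposition~\ref{prop:sinkhorn_dynamics}(1), $\|d_t\|\ge c_1\,\delta_t/\epsilon_t$. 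For the exponential schedule $\delta_t=(1-\alpha)\epsilon_t$, this means the injected drift is bounded below by the constant $c_1(1-\alpha)$, independent of $t$. The key observation is that the drift per step does not shrink as $\epsilon_t\to 0$.

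Next I would lower-bound the accumulated error. Following Corollary~\ref{cor:exp_failure}, the driving term is quasi-stationary: the drift direction is asymptotically aligned with the slow mode, the near-unit eigendirection of $J_\epsilon$ for which $1-\rho(J_\epsilon)=\Theta(\epsilon)$ by Proposition~\ref{prop:sinkhorn_dynamics}(2) and Theorem~\ref{app:proofs_spectral}. The error then behaves like the forced response $(I-J_{\epsilon_t})^{-1}d_t$. By the resolvent growth in Proposition~\ref{prop:sinkhorn_dynamics}(3), this response has norm of order $\epsilon_t^{-1}\cdot\delta_t/\epsilon_t=(1-\alpha)/\epsilon_t$. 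To make this rigorous I would project the recursion onto the left eigenvector $w_t$ of the slow mode and obtain a scalar recursion
\begin{equation*}
a_{t+1}\ge (1-\gamma'\epsilon_{t+1})\,a_t + c_2\,\delta_t/\epsilon_t .
\end{equation*}
Unrolling it over a window of roughly $1/\epsilon_t$ steps, during which $\epsilon$ changes only by a bounded factor, gives $a_t\gtrsim c\,(1-\alpha)/\epsilon_t\to\infty$. In fact, even before invoking the contraction, a simple sum of order $(1-\alpha)\sum_s 1$ of nonvanishing drifts already defeats a basin whose radius is $R(\epsilon)=O(1)$, or one that shrinks linearly as in Proposition~\ref{prop:linear_scaling}. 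In either case $\|e_t\|/R(\epsilon_t)\to\infty$, which contradicts $\limsup\|e_t\|\le R$. This is the divergence claimed.

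The main obstacle is controlling the projection step. The modal condition number $\kappa(V)$ from Proposition~\ref{prop:linear_scaling} enters when converting between $\|e_t\|$ and $|a_t|$. I also need the drift to have a component along $w_t$ that is bounded below uniformly in $t$, so that it is not annihilated by non-normal shear. I would try to obtain this from the stable active support of Assumption~\ref{ass:local-nondeg_full_appendix}, which fixes the sign pattern of $\nabla_\epsilon P^*_\epsilon$ and of the slow mode for $\epsilon\le\epsilon_{\min}$.

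Because the statement concerns linearized dynamics, I would handle the $O(\|e_t\|^2)$ remainder by arguing by contradiction. If the error stayed within $R$, the linearization would remain valid, and the bound above would then force the error to leave $R$. So the error cannot remain within $R$, and the tracking error diverges relative to the basin.
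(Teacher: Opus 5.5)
Your outline is the paper's own argument: per-step drift $\Theta(1)$, gap $\Theta(\epsilon)$, error $\sim$ drift$/$gap $=O(1/\epsilon_t)$, compared with a basin radius (the paper takes $R(\epsilon)=O(\epsilon)$ from Lemma~\ref{lem:hessian_explicit}). The step meant to make it rigorous fails, though. Under $\epsilon_{t+1}=\alpha\epsilon_t$, a window of $1/\epsilon_t$ steps shrinks $\epsilon$ by the factor $\alpha^{1/\epsilon_t}$, not by a bounded factor. A window over which $\epsilon$ changes by a bounded factor contains only $O(1)$ steps, far fewer than the slow mode's relaxation time $1/(\gamma'\epsilon_t)$, so the quasi-static response $(I-J_{\epsilon_t})^{-1}d_t$ is never reached. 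In fact your scalar recursion, with $|\lambda_t|\le 1$ and $|w_t^*d_t|\le B$, gives $|a_t|\le|a_0|+Bt=O(\log(1/\epsilon_t))$. So the claimed $a_t\gtrsim(1-\alpha)/\epsilon_t$ is false, and the paper's ``$e_{ss}\approx$ Drift/Gap'' has the same defect. What the recursion really shows is that $\sum_t\epsilon_t=\epsilon_0/(1-\alpha)<\infty$, so $\prod_s(1-\gamma'\epsilon_s)\ge\Pi>0$. The contraction is therefore negligible, and up to the factor $\Pi$ the slow coordinate simply integrates the drift. Your fallback sentence (``even before invoking the contraction'') silently needs exactly this summability.

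Integrating the drift, however, exposes a more serious obstruction. The sum $\sum_{s<t}d_s=P^*_{\epsilon_0}-P^*_{\epsilon_t}$ telescopes and stays bounded, so no fixed (or convergent) slow covector $w$ can satisfy $w^*d_t\ge c_2(1-\alpha)$ for all $t$. The route you propose for this lower bound is even self-contradictory. If the sign pattern of $\nabla_\epsilon P^*_\epsilon$ is fixed on $(0,\epsilon_{\min}]$, every entry of $P^*_\epsilon$ is monotone there. Then over the steps with $\epsilon_t\le\epsilon_{\min}$ you get $\sum_{t<T}\|d_t\|_1=\|P^*_{\epsilon_0}-P^*_{\epsilon_T}\|_1$ (entrywise $\ell^1$), which is bounded uniformly in $T$ by twice the total mass. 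This forces $d_t\to 0$, contradicting $\|d_t\|\ge c_1(1-\alpha)$. For the same reason $\|e_t\|$ is bounded for all $t$, since $P_t$ and $P^*_{\epsilon_t}$ both lie in a bounded set, so $\|e_t\|/R\to\infty$ is impossible when $R$ does not shrink. The corollary can only be obtained relative to a vanishing radius, which is how the paper closes its proof. In that regime you need neither accumulation nor an eigenvector projection. Grant the per-step bound $\|d_t\|\ge c_1(1-\alpha)$ that you and the paper both take from Proposition~\ref{prop:sinkhorn_dynamics}(1). Then a single step already puts $P_t$ at distance at least $c_1(1-\alpha)-R(\epsilon_t)$ from the new fixed point, which exceeds $R(\epsilon_{t+1})=O(\epsilon_{t+1})$ once $\epsilon_t$ is small. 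Two smaller points: the linearization is $e_{t+1}=J_{\epsilon_{t+1}}(e_t+d_t)$, not $J_{\epsilon_{t+1}}e_t+d_t$. And since $\delta_t/\epsilon_t=1-\alpha$ is not small, $d_t$ should be written as $\int_{\epsilon_{t+1}}^{\epsilon_t}\partial_\epsilon P^*_\epsilon\,d\epsilon$ rather than as a first-order Taylor term.
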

\begin{proof}
The drift magnitude is $||\Delta_t|| \approx ||\frac{dP^*}{d\epsilon}||\delta\epsilon \sim O(1/\epsilon) \cdot O(\epsilon) = O(1)$.
The restoring capacity of the solver is governed by the spectral gap: $\text{gap} \sim O(\epsilon)$.
The steady-state tracking error equilibrium $e_{ss}$ scales as:
$$ e_{ss} \approx \frac{\text{Drift}}{\text{Gap}} \sim \frac{O(1)}{O(\epsilon)} = O(1/\epsilon) $$
However, the validity region of the linearization $R(\epsilon)$ scales as $O(\epsilon)$ due to the $1/\epsilon^2$ curvature of the Hessian (Lemma A.8).
Collapse occurs when $e_{ss} > R(\epsilon)$, i.e., $O(1/\epsilon) > O(\epsilon)$, which is inevitable as $\epsilon \to 0$.
\end{proof}

\subsection{Implicit differentiation linear system}
\label{app:implicit_diff}

Differentiate marginal constraints. Define the block operator
\begin{equation}\label{eq:A_block}
\mathcal{A}(\varepsilon) \;=\;
\begin{bmatrix}
\operatorname{diag}(r^\varepsilon) & P^\varepsilon\\[3pt]
(P^\varepsilon)^\top & \operatorname{diag}(c^\varepsilon)
\end{bmatrix} \in\mathbb{R}^{2n\times 2n}.
\end{equation}
For a generic perturbation (in \(C\) or in \(\varepsilon\)) the derivative
potentials \((\partial f,\partial g)\) satisfy a linear system of the form
\begin{equation}\label{eq:lin_sys_full}
\mathcal{A}(\varepsilon)\begin{bmatrix}\partial f\\[2pt]\partial g\end{bmatrix}
=
\frac{1}{\varepsilon}\,\mathcal{B}(\partial C) + \mathcal{R}(\partial\varepsilon),
\end{equation}
where:
- \(\mathcal{B}(\partial C)\) depends linearly on the entrywise change
\(\partial C\) and is \(O(1)\) in \(P^\varepsilon\) (i.e., it does not
contain extra \(1/\varepsilon\) factors);
- \(\mathcal{R}(\partial\varepsilon)\) collects terms from differentiating
\(1/\varepsilon\) factors; again it is \(O(1)\) in \(P^\varepsilon\).

Solving \eqref{eq:lin_sys_full} yields
\[
\begin{bmatrix}\partial f\\[2pt]\partial g\end{bmatrix}
=
\frac{1}{\varepsilon}\mathcal{A}(\varepsilon)^{-1}\mathcal{B}(\partial C)
+\mathcal{A}(\varepsilon)^{-1}\mathcal{R}(\partial\varepsilon).
\]
Substituting this into the entrywise derivative of \(P^\varepsilon\)
gives the decomposition used below.

\subsection{Invertibility of \texorpdfstring{$\mathcal{A}(\varepsilon)$}{A(epsilon)} on the Active Support}
\label{app:A_schur}

As $\varepsilon \to 0$, the entropic optimal transport plan $P^\varepsilon$
concentrates its mass on the active support $S$ specified in
Assumption~\ref{ass:local-nondeg_full_appendix}.
Rows and columns not participating in $S$ may have marginals that vanish
as $\varepsilon \to 0$, rendering the full $2n\times 2n$ operator
$\mathcal{A}(\varepsilon)$ ill-conditioned.

However, the sensitivity analysis carried out in
Lemma~\ref{lem:hessian_explicit} only involves perturbations supported on
the active entries.
We therefore restrict attention to the reduced linear system induced by
the active support, on which uniform stability can be established.

\begin{definition}[Active reduced system]
Let $S \subseteq \{1,\dots,n\} \times \{1,\dots,n\}$ denote the active support
from Assumption~\ref{ass:local-nondeg_full_appendix}.
Define the active row and column sets
\[
I_S := \{ i \mid \exists j,\ (i,j)\in S \}, \qquad
J_S := \{ j \mid \exists i,\ (i,j)\in S \}.
\]
We define $\mathcal{A}_S(\varepsilon)$ as the restriction of
$\mathcal{A}(\varepsilon)$ to the variables
$(f_i)_{i\in I_S}$ and $(g_j)_{j\in J_S}$.
All operators are considered on the gauge-fixed subspace
\[
\left\{ (f,g) :
\sum_{i\in I_S} f_i = 0,\quad
\sum_{j\in J_S} g_j = 0
\right\}.
\]
\end{definition}

\begin{lemma}[Uniform invertibility on the active subspace]
\label{lem:schur_bound}
Under Assumption~\ref{ass:local-nondeg_full_appendix}, there exist $\varepsilon_0>0$
and a finite constant $M(\eta,|S|)$ such that for all
$0<\varepsilon\le\varepsilon_0$, the reduced operator
$\mathcal{A}_S(\varepsilon)$ is invertible on the gauge-fixed subspace and
satisfies
\[
\|\mathcal{A}_S(\varepsilon)^{-1}\|_{\mathrm{op}} \le M(\eta,|S|).
\]
\end{lemma}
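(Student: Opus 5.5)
The plan is to view $\mathcal{A}_S(\varepsilon)$ as a weighted graph Laplacian-type operator and bound its smallest eigenvalue on the gauge-fixed subspace from below by a constant depending only on $\eta$ and $|S|$.

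First, compute the quadratic form. For $(f,g)$ on $I_S\times J_S$, the restricted block operator, with $r_i=\sum_j P_{ij}$, gives
\[
\langle (f,g),\mathcal{A}_S(\varepsilon)(f,g)\rangle=\sum_{i,j} P^\varepsilon_{ij}(f_i+g_j)^2 .
\]
This holds up to the contributions of entries outside $S$ but inside $I_S\times J_S$, and these are nonnegative anyway. Hence $\mathcal{A}_S(\varepsilon)$ is symmetric positive semidefinite, and
\[
\langle x,\mathcal{A}_S x\rangle\ \ge\ \eta\sum_{(i,j)\in S}(f_i+g_j)^2 .
\]
This uses $P^\varepsilon_{ij}\ge\eta$ on $S$ from Assumption~\ref{ass:local-nondeg_full_appendix}. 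The lower bound is independent of $\varepsilon$, since the support pattern $S$ is fixed for $\varepsilon\le\varepsilon_{\min}$.

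Second, study the $\varepsilon$-free form $Q_S(f,g)=\sum_{(i,j)\in S}(f_i+g_j)^2$. Its kernel is $\{f_i=-g_j \text{ for all } (i,j)\in S\}$. On each connected component of the bipartite graph $G_S$ with vertex set $I_S\sqcup J_S$ and edge set $S$, the kernel consists of vectors with $f\equiv c$ and $g\equiv -c$. The anti-diagonal gauge $(f+c,g-c)$ is precisely this kernel. If $G_S$ is connected, the gauge-fixing constraints $\sum f_i=0$ and $\sum g_j=0$ force $c=0$, so $Q_S$ is positive definite on the gauge subspace. In that case $\lambda_{\min}(Q_S|_{\mathrm{gauge}})=:\mu_S>0$ is a constant of the finite graph, bounded below by a function of $|S|$ via a Cheeger or path argument: the smallest nonzero signed-Laplacian eigenvalue of a connected graph on at most $2|S|$ vertices is at least something like $1/(|S|^2)$. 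Combining the two steps gives $\|\mathcal{A}_S^{-1}\|_{\mathrm{op}}\le 1/(\eta\mu_S)=:M(\eta,|S|)$.

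The main obstacle is connectivity of $G_S$. If the active support is a permutation-like matching, $G_S$ splits into many components, and the two scalar gauge constraints do not kill the per-component constants. I would handle this by interpreting ``gauge-fixed subspace'' componentwise: impose one normalization per component, which is the natural gauge for a disconnected reduced system since the potentials decouple. Alternatively, I would note that the right-hand sides arising in Lemma~\ref{lem:hessian_explicit} are orthogonal to the per-component kernel, because they are mass-balance compatible. Then inverting on the range yields the same bound $1/(\eta\mu_S)$, taking the minimum of $\mu$ over components. I would state this explicitly, taking $\varepsilon_0=\varepsilon_{\min}$.
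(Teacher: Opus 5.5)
\textbf{The connected case is fine, but the disconnected case is a genuine gap: the lemma is false there, and neither of your patches rescues it.}

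\textbf{Connected case.} Your argument is correct and more explicit than the paper's. The paper identifies $\mathcal{A}_S(\varepsilon)$ with the Hessian of a strictly convex objective on a fixed face, deduces positive definiteness for each $\varepsilon$, and then appeals to ``compactness''. You instead use $\langle x,\mathcal{A}_S(\varepsilon)x\rangle=\sum_{i\in I_S,j\in J_S}P^\varepsilon_{ij}(f_i+g_j)^2$, plus nonnegative terms $P^\varepsilon_{ij}f_i^2$, $P^\varepsilon_{ij}g_j^2$ from entries with $j\notin J_S$ or $i\notin I_S$. This is $\ge\eta\,Q_S(x)$, an $\varepsilon$-free minorant, and it gives the explicit constant $1/(\eta\mu_S)$ for the compression of $\mathcal{A}_S(\varepsilon)$ to the gauge-fixed subspace.

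\textbf{Disconnected case.} Suppose some nonzero $x$ in the gauge-fixed subspace has $Q_S(x)=0$. Then every term with $(i,j)\in S$ vanishes and only off-support entries survive. Hence $\langle x,\mathcal{A}_S(\varepsilon)x\rangle\le 4n^2\tau(\varepsilon)\|x\|^2$, and so $\|\mathcal{A}_S(\varepsilon)^{-1}\|_{\mathrm{op}}\ge 1/(4n^2\tau(\varepsilon))\to\infty$.

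This happens in the paper's headline case, $S=\{(i,\sigma(i))\}$ a permutation with $n\ge 2$. Take $f=e_1-e_2$ and $g=-e_{\sigma(1)}+e_{\sigma(2)}$. This vector satisfies both gauge constraints, kills every $S$-term, and has Rayleigh quotient at most $4n\,\tau(\varepsilon)$. More generally the failure occurs whenever $G_S$ has at least three components, since $\ker Q_S$ then has dimension at least $3$ and only two constraints are imposed.

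\textbf{Patch (a): per-component normalization.} This is not a gauge. For $\varepsilon>0$ every $P^\varepsilon_{ij}>0$, so the components are coupled through the off-support entries. The null space of $\mathcal{A}_S(\varepsilon)$ is at most the global direction $(\mathbf{1},-\mathbf{1})$. The inter-component offsets are genuine unknowns with stiffness $O(\tau(\varepsilon))$. Constraining them away only bounds the inverse of a smaller compression, which no longer solves the implicit-differentiation system used in Lemmas~\ref{lem:sensitivity_lower_bound} and~\ref{lem:hessian_explicit}.

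\textbf{Patch (b): orthogonal right-hand sides.} Mass balance only gives orthogonality to the global $(\mathbf{1},-\mathbf{1})$. Orthogonality to all of $\ker Q_S$ holds exactly only for forcings that never touch cross-component entries, e.g.\ $\Delta C$ supported on $S$. Even then, the energy identity $\langle z,\mathcal{A}_S z\rangle=\langle z_\perp,b\rangle\ge\eta\mu_S\|z_\perp\|^2$ controls only the part $z_\perp$ of the solution orthogonal to $\ker Q_S$. The $\ker Q_S$-part is determined by an $O(\tau)$-weighted Laplacian with $O(\tau)$ forcing. It is generically nonzero and is not bounded by $1/(\eta\mu_S)$. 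In any case, a bound for special right-hand sides is not the operator-norm bound the lemma claims.

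\textbf{What the correct statement needs.} Add the hypothesis that $Q_S$ is nondegenerate on the gauge-fixed subspace, e.g.\ $G_S$ connected, as for a nondegenerate basic solution. Under that hypothesis your proof is complete with $\varepsilon_0=\varepsilon_{\min}$. Note that this excludes permutation supports. The paper's proof has the same hole, hidden in its compactness step: $(0,\varepsilon_0]$ is not compact, and extending to $\varepsilon=0$ requires exactly the nondegeneracy of the limiting form $\sum_{(i,j)\in S}P^0_{ij}(f_i+g_j)^2$ that you identified, where $P^0$ is a limit point of $P^\varepsilon$.
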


\begin{proof}
For any active row $i \in I_S$, Assumption~\ref{ass:local-nondeg_full_appendix} implies
\[
r_i^\varepsilon = \sum_j P^\varepsilon_{ij}
\ge \sum_{j:(i,j)\in S} P^\varepsilon_{ij}
\ge \eta.
\]
An analogous bound holds for all active columns $j\in J_S$.
Hence all diagonal entries of $\mathcal{A}_S(\varepsilon)$ are uniformly
bounded below by $\eta$.

Moreover, $\mathcal{A}_S(\varepsilon)$ coincides with the Hessian of the
entropically regularized transport objective restricted to the face of the
transport polytope induced by $S$.
By Assumption~\ref{ass:local-nondeg_full_appendix}, this face remains fixed for all
$\varepsilon \le \varepsilon_0$, and the objective is strictly convex on the
corresponding affine subspace modulo gauge invariance.

It follows that $\mathcal{A}_S(\varepsilon)$ is positive definite on the
gauge-fixed subspace.
Since the dimension of the reduced system depends only on $|S|$ and all
entries are uniformly bounded, compactness yields the existence of a
uniform inverse bound
\[
\|\mathcal{A}_S(\varepsilon)^{-1}\|_{\mathrm{op}} \le M(\eta,|S|).
\]
\end{proof}

\begin{remark}
Entries of $P^\varepsilon$ outside the active support $S$ are
$O(\tau(\varepsilon))$ and do not enter the reduced system.
Their influence on the active dynamics is therefore suppressed and does
not affect the stability of $\mathcal{A}_S(\varepsilon)$.
\end{remark}

\subsection{Directional sensitivity lower bound}
\label{app:proof_sensitivity_full}

In this section we establish a lower bound on the sensitivity of the
Sinkhorn map with respect to cost perturbations.
Unlike a refined asymptotic expansion, our argument does not rely on a
separation between leading and higher-order terms.
Instead, we show that there exists a perturbation direction for which the
Jacobian response grows at least on the order of $\varepsilon^{-1}$.

\begin{lemma}Lemma A.5 (Constructive operator-norm lower bound).
Let S denote the active support set...
\label{lem:sensitivity_lower_bound}
Let $S$ denote the active support set at the fixed point $P^\star=P^\star(C,\varepsilon)$ and assume the gauge-fixed linear system for the implicit derivative is of the form
\begin{equation}\label{eq:AS_system}
A_S(\varepsilon)\,\Delta z \;=\; \frac{1}{\varepsilon}\,B_S[\Delta C],
\end{equation}
where
\begin{itemize}
\item $A_S(\varepsilon):\mathcal{Z}\to\mathcal{Z}$ is the linear operator (matrix) on the gauge-fixed dual variable subspace $\mathcal{Z}$ arising from differentiating the Sinkhorn fixed-point equations with respect to the dual potentials;

\item $B_S:\mathcal{C}_S\to\mathcal{Z}$ is the linear map that takes a perturbation of the cost restricted to the active support, $\Delta C\in\mathcal{C}_S$, to the right-hand side of \eqref{eq:AS_system};

\item and the perturbation of the primal fixed point on the active support, $\Delta P_S$, is related to $\Delta z$ by a bounded linear map
\begin{equation}\label{eq:RS_def}
\Delta P_S \;=\; R_S[\Delta z],
\end{equation}
for some linear operator $R_S:\mathcal{Z}\to\mathcal{P}_S$ (here $\mathcal{P}_S$ denotes the space of primal perturbations supported on $S$).
\end{itemize}

Assume furthermore the following regularity conditions hold for sufficiently small $\varepsilon>0$:
\begin{enumerate}[label=(\alph*)]
\item (Invertibility) $A_S(\varepsilon)$ is invertible on $\mathcal{Z}$ and there exists $a_{\min}>0$ such that
\[
\sigma_{\min}\big(A_S(\varepsilon)\big)\;\ge\; a_{\min}>0
\]
uniformly for the range of $\varepsilon$ under consideration.

\item (Non-degeneracy of cost-action) The composed operator
\[
M(\varepsilon) \;:=\; R_S\,A_S(\varepsilon)^{-1}\,B_S
\]
is not identically zero as a map $\mathcal{C}_S\to\mathcal{P}_S$ (equivalently, there exists some $\Delta C\in\mathcal{C}_S$ with $M(\varepsilon)[\Delta C]\neq 0$).
\end{enumerate}

Define the fixed-point sensitivity operator
\[
DS_\varepsilon(C)\;:\;\mathcal{C}_S\to\mathcal{P}_S,\qquad
DS_\varepsilon(C)[\Delta C]\;=\;\Delta P_S.
\]

Then the following constructive lower bound holds:
\begin{equation}\label{eq:DS_lower}
\|DS_\varepsilon(C)\|_{\mathrm{op}}
\;=\;\frac{1}{\varepsilon}\,\|M(\varepsilon)\|_{\mathrm{op}}
\;\ge\; \frac{c(\varepsilon)}{\varepsilon},
\end{equation}
where $\|\cdot\|_{\mathrm{op}}$ denotes the operator norm (induced by a chosen Euclidean norm on the finite-dimensional spaces) and
\[
c(\varepsilon)\;:=\;\|M(\varepsilon)\|_{\mathrm{op}} \;>\; 0.
\]
Moreover, one may explicitly construct a test perturbation $\Delta C^\star$ (a singular vector of $M(\varepsilon)$) that attains (or approximates) the lower bound in \eqref{eq:DS_lower}:
\[
\frac{\|DS_\varepsilon(C)[\Delta C^\star]\|}{\|\Delta C^\star\|} \;\approx\; \frac{\|M(\varepsilon)\|_{\mathrm{op}}}{\varepsilon}.
\]

In particular, under the uniform lower bound $a_{\min}>0$ on $\sigma_{\min}(A_S(\varepsilon))$ and if the operator norms $\|R_S\|$ and $\|B_S\|$ remain $O(1)$ as $\varepsilon\downarrow 0$, then $c(\varepsilon)$ is bounded away from zero and therefore
\[
\|DS_\varepsilon(C)\|_{\mathrm{op}} \;\ge\; \frac{c_0}{\varepsilon}
\]
for some constant $c_0>0$ independent of $\varepsilon$ (for all sufficiently small $\varepsilon$).
\end{lemma}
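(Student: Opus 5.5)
The identity in \eqref{eq:DS_lower} is essentially linear algebra, and I would prove it by composing the three linear maps in the order in which they act. Fix $\varepsilon$ small enough that hypothesis (a) holds. Then $A_S(\varepsilon)$ is invertible on $\mathcal{Z}$, so \eqref{eq:AS_system} has the unique solution $\Delta z=\varepsilon^{-1}A_S(\varepsilon)^{-1}B_S[\Delta C]$. Substituting this into \eqref{eq:RS_def} gives
\[
DS_\varepsilon(C)[\Delta C]=\tfrac{1}{\varepsilon}\,R_S A_S(\varepsilon)^{-1}B_S[\Delta C]=\tfrac{1}{\varepsilon}M(\varepsilon)[\Delta C]
\]
for every $\Delta C\in\mathcal{C}_S$. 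Since the operator norm is absolutely homogeneous, this yields the equality $\|DS_\varepsilon(C)\|_{\mathrm{op}}=\varepsilon^{-1}\|M(\varepsilon)\|_{\mathrm{op}}$. The inequality with $c(\varepsilon)=\|M(\varepsilon)\|_{\mathrm{op}}$ is then trivial. Positivity $c(\varepsilon)>0$ is exactly hypothesis (b): a nonzero linear map has nonzero norm.

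For the constructive test perturbation, I would take a singular value decomposition of the finite-dimensional map $M(\varepsilon)$ with respect to the chosen Euclidean norms. I would set $\Delta C^\star$ to be a top right singular vector. Then $\|M(\varepsilon)\Delta C^\star\|=\|M(\varepsilon)\|_{\mathrm{op}}\|\Delta C^\star\|$ exactly, so the displayed ratio holds with equality, not merely approximately.

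The final claim, that $c(\varepsilon)$ is bounded below by some $c_0>0$ uniformly in $\varepsilon$, is where I expect the real obstacle. The stated hypotheses only give upper bounds: $\|A_S^{-1}\|\le 1/a_{\min}$ and $\|R_S\|,\|B_S\|=O(1)$. These control $\|M(\varepsilon)\|$ from above, not from below. My plan is therefore a compactness argument that uses only bounded quantities.
\begin{enumerate}
\item By Lemma~\ref{lem:schur_bound} and Assumption~\ref{ass:local-nondeg_full_appendix}, the entries of $A_S(\varepsilon)$ lie in a compact set, and $\sigma_{\min}(A_S(\varepsilon))\ge a_{\min}$. Hence $A_S(\varepsilon)^{-1}$ ranges over a compact set of invertible operators. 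Its inverse also has norm bounded below, namely $\|A_S^{-1}\|\ge 1/\|A_S\|$.
\item Suppose, for contradiction, that $\inf_{\varepsilon\le\varepsilon_0}\|M(\varepsilon)\|=0$. Take a sequence $\varepsilon_k$ with $\|M(\varepsilon_k)\|\to0$ and pass to a subsequence along which $A_S(\varepsilon_k)\to A_0$ (invertible), $R_S\to R_0$ and $B_S\to B_0$. Then $R_0A_0^{-1}B_0=0$.
\item To contradict this, I would strengthen hypothesis (b) to its natural uniform form: non-degeneracy of the cost action at every limit point. Equivalently, $\operatorname{range}(B_0)\not\subseteq A_0\ker R_0$. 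This is what the fixed face and fixed support pattern of Assumption~\ref{ass:local-nondeg_full_appendix} should supply.
\end{enumerate}

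To verify the strengthened condition concretely, I would use an explicit perturbation. Take $\Delta C$ supported on a single active entry $(i,j)\in S$ that lies on a cycle of the support graph. Its first-order effect on $P_{ij}$ is of size $\eta$ times a nonzero combinatorial factor. This bound is uniform because $P_{ij}\ge\eta$ on $S$. It would give an explicit $c_0$ of order $\eta$ divided by polynomial factors in $|S|$, after bounding $\|A_S\|$ above by a constant depending on $n$.
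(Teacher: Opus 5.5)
Your derivation of $DS_\varepsilon(C)=\varepsilon^{-1}M(\varepsilon)$, your use of (b) for $c(\varepsilon)>0$, and your exact attainment at a top right singular vector are the same as the paper's proof. For the uniform bound you depart from the paper, and your diagnosis is the correct one. The paper asserts $\|M(\varepsilon)\|_{\mathrm{op}}\ge\|R_S\|\,\|A_S(\varepsilon)^{-1}\|\,\|B_S\|$ ``by submultiplicativity'', which is the reverse of what submultiplicativity gives. It then turns the \emph{upper} bound $\|A_S(\varepsilon)^{-1}\|\le 1/a_{\min}$ into a lower bound on $\|M(\varepsilon)\|_{\mathrm{op}}$. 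As you say, (a), (b) and $O(1)$ norms only bound $c(\varepsilon)$ from above, so the final clause does not follow from the stated hypotheses. Your compactness argument is valid. Note, though, that by the same compactness your uniform non-degeneracy condition is essentially equivalent to $\inf_{\varepsilon\le\varepsilon_0}c(\varepsilon)>0$. It restates what is needed, so all the content lies in verifying it.

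That verification is where your plan has a genuine gap: Assumption~A.1 does not supply the condition, and your own cycle criterion shows why. On the reduced system, $\Delta P_S=-\varepsilon^{-1}\operatorname{diag}(P_S)\,\xi$, where $\xi$ is the $P$-weighted least-squares residual of $\Delta C|_S$ against $W=\{(f_i+g_j)_{(i,j)\in S}\}$. Suppose the support graph is a forest. This holds at every vertex of the transport polytope, hence whenever the unregularized optimum is unique, and in particular for permutations, the case the paper cares about. Then every active edge is a bridge, $W=\mathbb{R}^S$, $\xi\equiv 0$, and $M(\varepsilon)=0$. With the off-support entries restored, $\|M(\varepsilon)\|_{\mathrm{op}}=O(\tau(\varepsilon))$.

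Concretely, take row marginals $(\tfrac12,\tfrac12)$, column marginals $(\tfrac14,\tfrac34)$, and $C_{11}=C_{22}=0$, $C_{12}=C_{21}=1$. This satisfies A.1 with tree support $\{(1,1),(1,2),(2,2)\}$. Here $P_S$ is pinned by the marginals up to the leak $P^\varepsilon_{21}\approx\tfrac12 e^{-2/\varepsilon}$, so $\|DS_\varepsilon(C)\|_{\mathrm{op}}\to 0$ instead of exceeding $c_0/\varepsilon$.

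You therefore need to add, as an explicit hypothesis, the existence of an active entry on a cycle of the support graph (equivalently, non-uniqueness of the unregularized optimum). With that hypothesis, your single-entry perturbation closes the argument on the reduced system. Take $\Delta C=E_{kl}$, the unit matrix at $(k,l)$, and let $\phi$ be the alternating $\pm1$ indicator of a cycle of length $2m$ through $(k,l)$. Since $\phi\perp W$, we get $\operatorname{dist}(E_{kl},W)\ge(2m)^{-1/2}$. Hence $|\Delta P_{kl}|=\varepsilon^{-1}\sum_{S}P_{ij}\xi_{ij}^2\ge\eta/(2m\,\varepsilon)$, and so $\|DS_\varepsilon(C)\|_{\mathrm{op}}\ge\eta/(|S|\,\varepsilon)$, with no bound on $\|A_S\|$ needed.
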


\begin{proof}
The proof is constructive and algebraic; it follows directly from \eqref{eq:AS_system}--\eqref{eq:RS_def}.

From \eqref{eq:AS_system} and the invertibility of $A_S(\varepsilon)$ we obtain
\[
\Delta z \;=\; A_S(\varepsilon)^{-1}\,\frac{1}{\varepsilon}\,B_S[\Delta C].
\]
Using \eqref{eq:RS_def} we then have the representation
\[
\Delta P_S \;=\; R_S[\Delta z]
\;=\; R_S\,A_S(\varepsilon)^{-1}\,\frac{1}{\varepsilon}\,B_S[\Delta C]
\;=\; \frac{1}{\varepsilon}\,M(\varepsilon)[\Delta C].
\]
Since the map $\Delta C\mapsto\Delta P_S$ is exactly $DS_\varepsilon(C)$ (restricted to perturbations supported on $S$), the operator identity
\[
DS_\varepsilon(C) \;=\; \frac{1}{\varepsilon}\,M(\varepsilon)
\]
holds. Taking operator norms yields
\[
\|DS_\varepsilon(C)\|_{\mathrm{op}} \;=\; \frac{1}{\varepsilon}\,\|M(\varepsilon)\|_{\mathrm{op}}.
\]
By assumption (b) $M(\varepsilon)\neq 0$, hence its operator norm $c(\varepsilon):=\|M(\varepsilon)\|_{\mathrm{op}}$ is strictly positive, which proves \eqref{eq:DS_lower}.

To make the bound constructive, let $u\in\mathcal{C}_S$ be a (right) unit-norm singular vector corresponding to the top singular value of $M(\varepsilon)$, i.e.
\[
\|M(\varepsilon)u\| \;=\; \|M(\varepsilon)\|_{\mathrm{op}}.
\]
Choosing $\Delta C^\star=u$ yields
\[
\frac{\|DS_\varepsilon(C)[\Delta C^\star]\|}{\|\Delta C^\star\|}
=\frac{1}{\varepsilon}\,\frac{\|M(\varepsilon)u\|}{\|u\|}
=\frac{\|M(\varepsilon)\|_{\mathrm{op}}}{\varepsilon},
\]
thus achieving the claimed value.

Finally, to obtain a useful uniform-in-$\varepsilon$ lower bound, we note that
\[
\|M(\varepsilon)\|_{\mathrm{op}} \;\ge\; \frac{\|R_S\|^{-1}\,\|B_S\|}{\|A_S(\varepsilon)\|_{\mathrm{op}}}
\]
and, more directly by submultiplicativity,
\[
\|M(\varepsilon)\|_{\mathrm{op}} \;\ge\; \frac{\|R_S\|^{-1}\,\|B_S\|}{\|A_S(\varepsilon)^{-1}\|^{-1}} \;=\; \|R_S\|\,\|A_S(\varepsilon)^{-1}\|\,\|B_S\|.
\]
Under the uniform spectral lower bound $\sigma_{\min}(A_S(\varepsilon))\ge a_{\min}>0$ we have $\|A_S(\varepsilon)^{-1}\|\le 1/a_{\min}$, hence if $\|R_S\|$ and $\|B_S\|$ are $O(1)$ (bounded below away from zero on the relevant directions), then $\|M(\varepsilon)\|_{\mathrm{op}}$ is bounded below by a positive constant $c_0>0$ independent of $\varepsilon$. Consequently $\|DS_\varepsilon(C)\|_{\mathrm{op}}\ge c_0/\varepsilon$ for sufficiently small $\varepsilon$.

This completes the proof.
\end{proof}

\paragraph{Remark.}
\begin{itemize}
\item The operator $M(\varepsilon)=R_S A_S(\varepsilon)^{-1} B_S$ is explicitly constructible from the linearization matrices implicit in the fixed-point equations; thus $c(\varepsilon)=\|M(\varepsilon)\|_{\mathrm{op}}$ can be computed numerically in practice (compute $A_S(\varepsilon)$ from the linearized active-system, invert it numerically, compose with $B_S$ and $R_S$, and then take the top singular value). In the submission we follow this recipe to produce the numerical estimates reported in Section~6 / Appendix~A.6.
\item A simple explicit test perturbation that often suffices is to pick $\Delta C$ supported on a single (or a pair of) active entries: for an active index $(i,j)\in S$ choose $\Delta C$ equal to the corresponding canonical basis vector; typically $B_S[\Delta C]\neq 0$ and through the invertible $A_S(\varepsilon)$ and nontrivial $R_S$ this produces a nonzero $\Delta P_S$, certifying $M(\varepsilon)\neq 0$.
\end{itemize}

\begin{lemma}[~\ref{lemma:A6_replacement} --- resolvent bound and spectral-distance consequence]
\label{lemma:A6_replacement}
Let $\Phi(P,C)$ be the Sinkhorn fixed-point mapping in the active subspace (so that a fixed point $P^\star$ satisfies $P^\star=\Phi(P^\star,C)$), and denote
\[
J \coloneqq D_P\Phi(P^\star,C)
\]
the Jacobian of $\Phi$ with respect to $P$ evaluated at the fixed point. Assume $I-J$ is invertible (equivalently $1\not\in\sigma(J)$). Define the linear operator
\[
DS_\varepsilon(C)\;=\; \frac{\partial P^\star}{\partial C},
\]
i.e. the sensitivity of the fixed point with respect to $C$ (dependence on $\varepsilon$ is implicit). Let
\[
\partial_C\Phi \coloneqq D_C\Phi(P^\star,C)
\]
be the derivative of $\Phi$ with respect to $C$ at the fixed point, and denote by $\|\cdot\|$ the operator norm induced by the Euclidean norm (or any fixed matrix/operator norm).

Then the following hold.

\begin{enumerate}
\item (Identity) The implicit-differentiation identity
\[
DS_\varepsilon(C)\;=\;(I-J)^{-1}\,\partial_C\Phi
\]
is valid.

\item (Norm factorization) Consequently,
\[
\|DS_\varepsilon(C)\|
\;\le\;
\|(I-J)^{-1}\|\,\|\partial_C\Phi\|.
\]

\item (Resolvent lower bound) Let $\sigma(J)$ denote the spectrum of $J$, and define the spectral distance
\[
\mathrm{dist}\big(1,\sigma(J)\big)\;=\;\min_{\lambda\in\sigma(J)}|1-\lambda|.
\]
Then
\[
\|(I-J)^{-1}\|
\;\ge\;
\frac{1}{\mathrm{dist}\big(1,\sigma(J)\big)}.
\]
Hence a divergence of $\|DS_\varepsilon(C)\|$ forces $\mathrm{dist}(1,\sigma(J))\to 0$.

\item (Upper bound under diagonalizability / conditioning) If, additionally, $J$ is diagonalizable, i.e. $J=V\Lambda V^{-1}$ with $\Lambda=\operatorname{diag}(\lambda_i)$, then
\[
\|(I-J)^{-1}\|
\;=\;
\big\|V (I-\Lambda)^{-1} V^{-1}\big\|
\;\le\;
\kappa(V)\,\max_i\frac{1}{|1-\lambda_i|}
\;=\;
\frac{\kappa(V)}{\mathrm{dist}\big(1,\sigma(J)\big)},
\]
where $\kappa(V)=\|V\|\,\|V^{-1}\|$ is the condition number of the modal matrix $V$. In particular, if $J$ is normal then $\kappa(V)=1$ and the equality
\(\|(I-J)^{-1}\|=1/\mathrm{dist}(1,\sigma(J))\)
holds.

\item (Consequence for $\varepsilon$-scalings) Suppose there exists constants $M_{\Phi}>0$ and $c>0$ (independent of small $\varepsilon$) such that
\[
\|\partial_C\Phi\|\le M_{\Phi}\qquad\text{and}\qquad
\|DS_\varepsilon(C)\|\ge \frac{c}{\varepsilon}
\]
for sufficiently small $\varepsilon>0$. Then from (2) and (3) we obtain the quantitative bound
\[
\mathrm{dist}\big(1,\sigma(J)\big)\;\le\; \frac{M_{\Phi}}{\|DS_\varepsilon(C)\|}\;\le\; \frac{M_{\Phi}}{c}\,\varepsilon.
\]
Moreover, if $J$ is diagonalizable with uniformly bounded condition number $\kappa(V)\le \kappa_{\max}$, then
\[
\max_i |1-\lambda_i|\;=\;\mathrm{dist}\big(1,\sigma(J)\big)\;\le\; \frac{M_{\Phi}}{c}\,\varepsilon,
\]
and in particular the spectral radius satisfies
\[
1-\max_i|\lambda_i|\;=\;1-\lambda_{\max}\;\le\; C\,\varepsilon
\]
for some constant $C$ depending only on $M_{\Phi}$, $c$ and (if used) $\kappa_{\max}$.
\end{enumerate}
\end{lemma}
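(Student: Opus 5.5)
The plan is to prove the five items in order, each resting on the previous ones, using only finite-dimensional linear algebra and the implicit function theorem.

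\textbf{Items 1 and 2.} For item 1, I will view the fixed-point condition as $F(P,C):=P-\Phi(P,C)=0$. Its $P$-derivative at $(P^\star,C)$ is $I-J$, which is invertible by hypothesis. The implicit function theorem (assuming $\Phi$ is $C^1$ near the fixed point, as it is for Sinkhorn on the active subspace) then gives a locally unique $C^1$ branch $P^\star(C)$. Differentiating $P^\star(C)=\Phi(P^\star(C),C)$ gives $(I-J)\,DS_\varepsilon(C)=\partial_C\Phi$, and hence the identity. Item 2 is then immediate from submultiplicativity of the induced operator norm.

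\textbf{Item 3.} I will use the fact that any induced norm dominates the spectral radius: $\|A\|\ge\rho(A)$. Apply this with $A=(I-J)^{-1}$. Its eigenvalues are $(1-\lambda)^{-1}$ for $\lambda\in\sigma(J)$, so $\rho\big((I-J)^{-1}\big)=1/\min_{\lambda}|1-\lambda|=1/\mathrm{dist}(1,\sigma(J))$. This argument needs no diagonalizability and works for any fixed operator norm. I prefer it to the singular-value route of the earlier Theorem~\ref{app:proofs_spectral}, which only applies to the Euclidean norm.

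The divergence statement then follows. Combining items 2 and 3 gives $\|DS_\varepsilon\|\le\|\partial_C\Phi\|\,\|(I-J)^{-1}\|$. So if $\|\partial_C\Phi\|$ stays bounded, blow-up of $\|DS_\varepsilon\|$ forces blow-up of the resolvent norm. I must be careful here: item 3 only gives a lower bound on the resolvent norm, so it does not by itself force the spectrum towards $1$. That conclusion needs an upper bound on the resolvent, which item 4 supplies when $\kappa(V)$ is controlled. For item 5, by contrast, the lower bound on $\|DS_\varepsilon\|$ will be fed in directly.

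\textbf{Item 4.} For diagonalizable $J=V\Lambda V^{-1}$, I will write $(I-J)^{-1}=V(I-\Lambda)^{-1}V^{-1}$ and bound it by $\|V\|\,\|V^{-1}\|\,\max_i|1-\lambda_i|^{-1}$. The maximum of the reciprocals equals $1/\min_i|1-\lambda_i|=1/\mathrm{dist}(1,\sigma(J))$. For normal $J$, $V$ can be taken unitary in the Euclidean norm, giving $\kappa(V)=1$. Then item 3 supplies the reverse inequality, so equality holds.

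\textbf{Item 5 and the main obstacle.} Item 5 is where the real care is needed. The bound $\mathrm{dist}(1,\sigma(J))\le (M_\Phi/c)\,\varepsilon$ is obtained by chaining three facts:
\begin{itemize}
\item the hypothesis $\|DS_\varepsilon(C)\|\ge c/\varepsilon$;
\item the bound $\|DS_\varepsilon(C)\|\le \|\partial_C\Phi\|/\mathrm{dist}(1,\sigma(J))$, which holds in the Euclidean norm via $\sigma_{\min}$, or under item 4's bound with a factor $\kappa_{\max}$;
\item the hypothesis $\|\partial_C\Phi\|\le M_\Phi$.
\end{itemize}
One caveat: in a general norm I only have item 4's upper bound, so the constant acquires a factor $\kappa_{\max}$. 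This is exactly where the "if used" dependence on $\kappa_{\max}$ in the statement enters.

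The subsequent equality "$\max_i|1-\lambda_i|=\mathrm{dist}(1,\sigma(J))$" cannot hold as written, because $\mathrm{dist}$ is a minimum over the spectrum. I would read it as the minimum, that is, the eigenvalue $\lambda^\ast$ closest to $1$.

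The spectral-radius conclusion then follows from the reverse triangle inequality, $|\lambda^\ast|\ge 1-|1-\lambda^\ast|$. This gives
\[
\rho(J)\ge 1-\tfrac{M_\Phi}{c}\,\varepsilon,
\qquad\text{hence}\qquad
1-\rho(J)\le \tfrac{M_\Phi}{c}\,\varepsilon .
\]
No uniform spectral statement about all the $\lambda_i$ is needed.
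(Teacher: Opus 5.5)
\textbf{Verdict: genuine gap in item 5.} Items 1--4 are sound and follow the paper's route. Your implicit-function-theorem justification of item 1 is slightly more careful than the paper's formal differentiation. Your bound $\|(I-J)^{-1}\|\ge\rho\bigl((I-J)^{-1}\bigr)$ in item 3 is a norm-independent version of the paper's eigenvector test. You are also right on three further points: the $\max_i$ in item 5 must be a $\min$; $1-\rho(J)\le\mathrm{dist}(1,\sigma(J))$ follows from $|\lambda^\ast|\ge 1-|1-\lambda^\ast|$; and item 3 alone does not force $\mathrm{dist}(1,\sigma(J))\to 0$.

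The gap is your claim that $\|DS_\varepsilon(C)\|\le\|\partial_C\Phi\|/\mathrm{dist}(1,\sigma(J))$ ``holds in the Euclidean norm via $\sigma_{\min}$''. It does not. The facts $\|(I-J)^{-1}\|=1/\sigma_{\min}(I-J)$ and $\sigma_{\min}(I-J)\le\mathrm{dist}(1,\sigma(J))$ together are exactly item 3, which is a \emph{lower} bound on the resolvent. An upper bound on $\|DS_\varepsilon(C)\|$ would need $\sigma_{\min}(I-J)\ge\mathrm{dist}(1,\sigma(J))$, which holds for normal $J$ but fails in general. The paper's proof makes the same inversion when it ``combines (2) and (3)'' to get this inequality. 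It also contradicts the caution you yourself raised after item 3.

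The unconditional first display of item 5 is in fact false, so this step cannot be repaired. Neither your argument nor the paper's uses any Sinkhorn-specific structure, so an abstract counterexample suffices. Take $J=\begin{pmatrix}0&1/\varepsilon\\0&0\end{pmatrix}$ and $\partial_C\Phi=I$, realized e.g.\ by $\Phi(P,C)=JP+C$. Then $(I-J)^{-1}=I+J$, so $\|DS_\varepsilon(C)\|\ge 1/\varepsilon$ with $c=M_\Phi=1$. Yet $\mathrm{dist}(1,\sigma(J))=1$ and $\rho(J)=0$, so for $\varepsilon<1$ neither $\mathrm{dist}(1,\sigma(J))\le\varepsilon$ nor $1-\rho(J)\le\varepsilon$ holds.

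The only valid argument is your second branch. If $J=V\Lambda V^{-1}$ with $\kappa(V)\le\kappa_{\max}$, item 4 gives $c/\varepsilon\le\|DS_\varepsilon(C)\|\le\kappa_{\max}M_\Phi/\mathrm{dist}(1,\sigma(J))$. Hence $\mathrm{dist}(1,\sigma(J))\le\kappa_{\max}M_\Phi\,\varepsilon/c$, and then $1-\rho(J)\le\kappa_{\max}M_\Phi\,\varepsilon/c$. The factor $\kappa_{\max}$ is needed in the Euclidean norm too; it can be dropped only in special cases such as normal $J$.

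To fix the proposal:
\begin{enumerate}
\item Delete the Euclidean branch.
\item Assert the first display of item 5 only under the conditioning hypothesis.
\item Put $\kappa_{\max}$ into your final bound $1-\rho(J)\le(M_\Phi/c)\,\varepsilon$, which as written rests on the invalid step.
\end{enumerate}
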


\begin{proof}
We give a self-contained proof of each item.

\paragraph{(Identity).}
Differentiate the fixed-point equation $F(P,C)\coloneqq P-\Phi(P,C)=0$ with respect to $C$ at the point $(P^\star,C)$ in the direction $\Delta C$. The total derivative yields
\[
D_P F\,[\Delta P]+D_C F\,[\Delta C]=0,
\]
i.e.
\[
(I-D_P\Phi)\,\Delta P - \partial_C\Phi[\Delta C]=0.
\]
Rearranging gives
\[
\Delta P \;=\; (I-J)^{-1}\,\partial_C\Phi[\Delta C],
\]
and since this holds for arbitrary $\Delta C$ the identity $DS_\varepsilon(C)=(I-J)^{-1}\partial_C\Phi$ follows.

\paragraph{(Norm factorization).}
Taking operator norms of the identity gives
\[
\|DS_\varepsilon(C)\| \;=\; \|(I-J)^{-1}\partial_C\Phi\|
\;\le\; \|(I-J)^{-1}\|\,\|\partial_C\Phi\|,
\]
which proves (2).

\paragraph{(Resolvent lower bound).}
Recall that for any invertible matrix $A$ the operator norm of its inverse equals the reciprocal of the smallest singular value:
\[
\|A^{-1}\| \;=\; \frac{1}{\sigma_{\min}(A)}.
\]
Apply this with $A=I-J$ to obtain $\|(I-J)^{-1}\|=1/\sigma_{\min}(I-J)$. For any eigenvalue $\lambda\in\sigma(J)$ and any unit eigenvector $v$ for that eigenvalue (i.e. $Jv=\lambda v$ with $\|v\|=1$), we have
\[
\|(I-J)v\| \;=\; |1-\lambda|.
\]
Therefore, by the variational characterization of the smallest singular value,
\[
\sigma_{\min}(I-J)
\;=\;
\min_{\|x\|=1}\|(I-J)x\|
\;\le\;
\min_{\lambda\in\sigma(J)} |1-\lambda|
\;=\;
\mathrm{dist}\big(1,\sigma(J)\big).
\]
Taking reciprocals yields
\[
\|(I-J)^{-1}\| \;=\; \frac{1}{\sigma_{\min}(I-J)} \;\ge\; \frac{1}{\mathrm{dist}\big(1,\sigma(J)\big)},
\]
which proves (3). This inequality is non-asymptotic and requires no normality/diagonalizability hypotheses.

\paragraph{(Upper bound under diagonalizability).}
If $J=V\Lambda V^{-1}$ with $\Lambda=\operatorname{diag}(\lambda_i)$ then
\[
(I-J)^{-1}=V(I-\Lambda)^{-1}V^{-1}.
\]
Hence
\[
\|(I-J)^{-1}\|
\le \|V\|\,\|V^{-1}\|\,\|(I-\Lambda)^{-1}\|
= \kappa(V)\,\max_i\frac{1}{|1-\lambda_i|}
= \frac{\kappa(V)}{\mathrm{dist}\big(1,\sigma(J)\big)}.
\]
If $J$ is normal then $V$ can be chosen unitary, $\kappa(V)=1$, and the expression is exact:
$\|(I-J)^{-1}\| = 1/\mathrm{dist}(1,\sigma(J))$.

\paragraph{(Consequence for $\varepsilon$-scalings).}
Combining (2) and (3) we have
\[
\|DS_\varepsilon(C)\| \;\le\; \|(I-J)^{-1}\|\,\|\partial_C\Phi\|
\;\le\; \frac{\|\partial_C\Phi\|}{\mathrm{dist}(1,\sigma(J))}.
\]
Rearranging gives
\[
\mathrm{dist}\big(1,\sigma(J)\big)\;\le\;\frac{\|\partial_C\Phi\|}{\|DS_\varepsilon(C)\|}.
\]
Under the assumptions $\|\partial_C\Phi\|\le M_{\Phi}$ and $\|DS_\varepsilon(C)\|\ge c/\varepsilon$ the displayed inequality becomes
\[
\mathrm{dist}\big(1,\sigma(J)\big)\;\le\;\frac{M_{\Phi}}{c}\,\varepsilon,
\]
which proves the stated $O(\varepsilon)$ bound. If $J$ is diagonalizable with conditioning $\kappa(V)\le\kappa_{\max}$, then the upper bound in item (4) implies the same proportional dependence for the individual eigenvalue gaps, and hence an $O(\varepsilon)$ bound on $1-\lambda_{\max}$ up to the multiplicative factor $\kappa_{\max}$.

This completes the proof.
\end{proof}

\paragraph{Remark.}
The chain of inequalities above makes manifest two facts that are important for interpreting the sensitivity blow-up:
\begin{itemize}
\item The inequality $\|(I-J)^{-1}\|\ge 1/\mathrm{dist}(1,\sigma(J))$ always holds and therefore any divergence of $\|DS_\varepsilon\|$ forces the spectrum of $J$ to approach the point $1$ in the complex plane (spectral distance goes to zero).
\item To upgrade the spectral-distance statement into a statement about the \emph{largest} eigenvalue (for instance to conclude $1-\lambda_{\max}\sim C\varepsilon$) one needs extra regularity such as bounded diagonalization conditioning $\kappa(V)$ or near-normality of $J$. In non-normal cases pseudospectral effects can make $\|(I-J)^{-1}\|$ much larger than $1/\mathrm{dist}(1,\sigma(J))$, so numerical spectral/pseudospectral diagnostics are recommended to validate any stronger asymptotic claim in practice.
\end{itemize}

\begin{lemma}[Explicit bound on second-order remainder]
\label{lem:hessian_explicit}
Let $\mathcal{H}(\varepsilon) = D^2 \mathcal{S}_\varepsilon$ denote the Hessian tensor of the Sinkhorn map with respect to the cost matrix $C$ (and parameter $\varepsilon$).
Under Assumption~\ref{ass:local-nondeg_full_appendix}, for all $0 < \varepsilon \le \varepsilon_0$, the operator norm of the Hessian is bounded by:
\[
\|D^2 \mathcal{S}_\varepsilon\|_{\mathrm{op}} \le \frac{K_{\mathrm{quad}}}{\varepsilon^2},
\]
where $K_{\mathrm{quad}}$ depends on the active support size $|S|$, the non-degeneracy bound $\eta$, and the condition number $M$ from Lemma~\ref{lem:schur_bound}.
Consequently, the Taylor remainder for a perturbation $\delta = (\delta C, \delta \varepsilon)$ satisfies $\|\mathcal{R}(\delta)\| \le \frac{K}{\varepsilon^2}\|\delta\|^2$.
\end{lemma}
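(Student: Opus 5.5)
The bound will come from differentiating the implicit-function representation of the Sinkhorn map twice and counting powers of $\varepsilon^{-1}$. We write the optimal plan on the active support as $P^\varepsilon_{ij}=\exp\big((f_i+g_j-C_{ij})/\varepsilon\big)$. The dual pair $z=(f,g)$ is defined implicitly by the marginal equations $F(z,C,\varepsilon)=0$, restricted to the gauge-fixed active subspace. By Lemma~\ref{lem:schur_bound}, the linearization in $z$ is $\varepsilon^{-1}\mathcal{A}_S(\varepsilon)$ with $\|\mathcal{A}_S(\varepsilon)^{-1}\|_{\mathrm{op}}\le M$. The first-order system of Section~\ref{app:implicit_diff} then gives
\[
\partial z=\mathcal{A}_S^{-1}\big(\mathcal{B}(\partial C)+\varepsilon\,\mathcal{R}(\partial\varepsilon)\big),
\]
so $\|\partial z\|=O(1)$ in the scaling of the right-hand side. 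Because the exponent carries a factor $1/\varepsilon$, this yields $\|\partial P\|\le K_1/\varepsilon$, consistent with Lemma~\ref{lem:sensitivity_lower_bound}.

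For the second derivative we differentiate $F(z(C,\varepsilon),C,\varepsilon)=0$ twice:
\[
D_zF\,[\partial^2 z]=-\big(D^2_{zz}F[\partial z,\partial z]+2D^2_{z\theta}F[\partial z,\partial\theta]+D^2_{\theta\theta}F[\partial\theta,\partial\theta]\big),\qquad \theta=(C,\varepsilon).
\]
Every entry of $F$ is a sum of terms $P_{ij}$, so each second partial is $P_{ij}$ times a product of two factors of size $O(1/\varepsilon)$. Two kinds of factors occur: derivatives of the exponent in $z$ or $C$, which are $\pm 1/\varepsilon$, and the derivative in $\varepsilon$, which is $-(f_i+g_j-C_{ij})/\varepsilon^2$. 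On $S$ this last factor is $\log P_{ij}/\varepsilon$, and it is bounded by $|\log\eta|/\varepsilon$ because $\eta\le P_{ij}\le 1$. Hence the right-hand side is $O(\varepsilon^{-2})$ with constants depending on $|S|$ and $\eta$. Inverting $D_zF=\varepsilon^{-1}\mathcal{A}_S$ contributes a factor $\varepsilon M$, giving $\|\partial^2 z\|\le K'/\varepsilon$.

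Finally, the second derivative of $P_{ij}=\exp(\phi_{ij}/\varepsilon)$ is
\[
P_{ij}\big((\partial\phi_{ij}/\varepsilon)^2+\partial^2\phi_{ij}/\varepsilon\big),
\]
where $\partial\phi_{ij}$ absorbs the $\varepsilon$-derivative of the exponent. The first term is $O(\varepsilon^{-2})$ by the first-order bound. The second term is $O(\varepsilon^{-1})\cdot O(\varepsilon^{-1})$ by the bound on $\partial^2 z$. Collecting these gives $\|D^2\mathcal{S}_\varepsilon\|_{\mathrm{op}}\le K_{\mathrm{quad}}/\varepsilon^2$, with $K_{\mathrm{quad}}$ polynomial in $M$, $|S|$ and $|\log\eta|$. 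The remainder bound follows from Taylor's theorem with integral remainder along the segment from $(C,\varepsilon)$ to $(C+\delta C,\varepsilon+\delta\varepsilon)$. The Hessian bound holds uniformly on that segment provided $|\delta\varepsilon|\le\varepsilon/2$, so that the perturbed temperature stays comparable to $\varepsilon$; the constant becomes $K=4K_{\mathrm{quad}}$, which is what the statement implicitly assumes for small $\delta$.

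The main obstacle is the coordinates off the support $S$. There $P_{ij}\le\tau(\varepsilon)$, but the factor $(f_i+g_j-C_{ij})/\varepsilon$ is no longer bounded by $|\log\eta|/\varepsilon$. It behaves like $\log\tau(\varepsilon)/\varepsilon$, which can be large. My first attempt is to use $x^2e^{-x}\le 4e^{-2}$ with $x=-\log P_{ij}$. This gives $P_{ij}\log^2P_{ij}=O(1)$, which keeps these entries at $O(\varepsilon^{-2})$ and their leakage into the active marginals bounded. If that bound is not uniform, I would instead restrict the statement, as Lemma~\ref{lem:schur_bound} does, to perturbations supported on $S$ together with the $\varepsilon$-direction.
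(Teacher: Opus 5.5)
Your proposal follows essentially the same route as the paper: differentiate the implicit dual system a second time, invert $\mathcal{A}_S(\varepsilon)$ using the bound $M$ from Lemma~\ref{lem:schur_bound}, and count factors of $1/\varepsilon$ before passing through the exponential map. Your bookkeeping ($\|\partial z\|=O(1)$, $\|\partial^2 z\|=O(\varepsilon^{-1})$, with the remaining $\varepsilon^{-1}$ supplied by the exponent) is the self-consistent version of the paper's count, which as written asserts $\|z\|=O(\varepsilon^{-1})$ and $\|\partial z\|=O(\varepsilon^{-2})$ and would overshoot $\varepsilon^{-2}$ if taken literally; your handling of the off-support entries and of the Taylor remainder (with $|\delta\varepsilon|\le\varepsilon/2$) goes beyond what the paper writes, and your closing hedge is unnecessary because $x^2e^{-x}\le 4e^{-2}$ holds uniformly for all $x\ge 0$.
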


\begin{proof}
We derive the bound by explicitly differentiating the Jacobian operator derived in Appendix~\ref{app:implicit_diff}.
Recall that the first variation $\dot{P} = D\mathcal{S}_\varepsilon[\dot{C}]$ is determined by the linear system on the active support:
\begin{equation}
\label{eq:first_order_sys}
\mathcal{A}_S(\varepsilon) \begin{pmatrix} \dot{f} \\ \dot{g} \end{pmatrix} = \frac{1}{\varepsilon} \mathcal{B}(\dot{C}),
\end{equation}
where $\mathcal{A}_S(\varepsilon)$ involves blocks of $P^\varepsilon$. To find the second derivative (Hessian), we differentiate \eqref{eq:first_order_sys} again with respect to the parameters.
Let $\partial$ denote the differentiation operator. Applying the product rule to $\mathcal{A}_S \mathbf{z} = \mathbf{b}$ (where $\mathbf{z}$ are dual potentials):
\[
\mathcal{A}_S (\partial \mathbf{z}) + (\partial \mathcal{A}_S) \mathbf{z} = \partial \mathbf{b}.
\]
Rearranging for the second-order variation $\partial \mathbf{z}$:
\[
\partial \mathbf{z} = \mathcal{A}_S^{-1} \left( \partial \mathbf{b} - (\partial \mathcal{A}_S) \mathbf{z} \right).
\]
We now bound the norms of each term on the RHS:
\begin{itemize}
\item \textbf{Invertibility:} By Lemma~\ref{lem:schur_bound}, $\|\mathcal{A}_S^{-1}\|_{\mathrm{op}} \le M < \infty$ for $\varepsilon \le \varepsilon_0$.
\item \textbf{First-order scaling:} From Lemma~\ref{lem:sensitivity_lower_bound}, we know that the first-order potentials scale as $\|\mathbf{z}\| \sim O(1/\varepsilon)$.
\item \textbf{Derivative of the Operator $\partial \mathcal{A}_S$:} The matrix $\mathcal{A}_S$ contains entries $P_{ij}^\varepsilon$ and $0$.
Since $P_{ij}^\varepsilon = \exp((f_i+g_j-C_{ij})/\varepsilon)$, its derivative is:
\[
\partial P_{ij}^\varepsilon = \frac{1}{\varepsilon} P_{ij}^\varepsilon (\partial f_i + \partial g_j - \partial C_{ij}) - \frac{1}{\varepsilon^2} P_{ij}^\varepsilon (\dots) \delta \varepsilon.
\]
Crucially, differentiating the exponential map introduces a factor of $1/\varepsilon$. Thus, the operator derivative scales as:
\[
\|\partial \mathcal{A}_S\|_{\mathrm{op}} \le \frac{C_1}{\varepsilon} \|P^\varepsilon\|_{\mathrm{op}} \le \frac{C_1 p_{\max}}{\varepsilon}.
\]
\end{itemize}
Combining these factors into the expression for $\partial \mathbf{z}$:
\[
\|\partial \mathbf{z}\| \le M \left( O(1/\varepsilon^2) + \underbrace{\|\partial \mathcal{A}_S\|}_{\sim 1/\varepsilon} \cdot \underbrace{\|\mathbf{z}\|}_{\sim 1/\varepsilon} \right) \le \frac{K'}{\varepsilon^2}.
\]
Finally, the Hessian of the primal plan $P$ involves $\partial \mathbf{z}$ (term of order $1/\varepsilon^2$) and products of first derivatives $(\dot{f}\dot{g})/\varepsilon^2$.
Both contributions scale as $O(1/\varepsilon^2)$.
The bound relies explicitly on $M$ staying finite (Assumption \ref{ass:local-nondeg_full_appendix}), ensuring the $1/\varepsilon^2$ explosion is not "cancelled out" by a vanishing inverse.
\end{proof}

\subsection{Refined Sensitivity Analysis: The \texorpdfstring{$O(\epsilon^{-1})$}{O(1/eps)} Scaling}
\label{app:sensitivity_proof}

In this section, we provide a rigorous derivation of the sensitivity of the optimal transport plan $P^*_\epsilon$ with respect to the regularization parameter $\epsilon$. We explicitly address why the sensitivity scales as $O(\epsilon^{-1})$ rather than the $O(\epsilon^{-2})$ scaling one might expect from the Gibbs kernel derivative.

\begin{lemma}[Thermodynamic Sensitivity Scaling]
\label{lemma:sensitivity_scaling}
Let $P^*_\epsilon$ be the unique solution to the entropy-regularized optimal transport problem with cost matrix $C$. Under the non-degeneracy assumption (Assumption A.1), the Frobenius norm of the derivative of the optimal plan with respect to temperature satisfies:
\begin{equation}
    \left\| \frac{\partial P^*_\epsilon}{\partial \epsilon} \right\|_F = \Theta(\epsilon^{-1}) \quad \text{as } \epsilon \to 0.
\end{equation}
\end{lemma}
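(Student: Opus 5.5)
The plan is to differentiate the Gibbs form of the plan directly in $\varepsilon$ and to use the marginal constraints to eliminate the dual derivatives. Writing $P^\varepsilon_{ij}=\exp\big((f_i+g_j-C_{ij})/\varepsilon\big)$ and noting that $(f_i+g_j-C_{ij})/\varepsilon=\log P^\varepsilon_{ij}$, the chain rule gives
\[
\partial_\varepsilon P^\varepsilon_{ij}=\frac{P^\varepsilon_{ij}}{\varepsilon}\Big(\dot f_i+\dot g_j-\log P^\varepsilon_{ij}\Big),\qquad \dot f:=\partial_\varepsilon f^\varepsilon,\ \dot g:=\partial_\varepsilon g^\varepsilon .
\]
This identity is the source of the $\varepsilon^{-1}$ rather than $\varepsilon^{-2}$ scaling. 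The naive $\varepsilon^{-2}$ factor from differentiating the exponent is exactly the term $-(f_i+g_j-C_{ij})/\varepsilon^2$. This term is absorbed into $-\log P^\varepsilon_{ij}/\varepsilon$, which is $O(1/\varepsilon)$ on the active support because $P^\varepsilon_{ij}\ge\eta$ there.

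\textbf{Upper bound.} Since the marginals are fixed, $\sum_j\partial_\varepsilon P_{ij}=0$ and $\sum_i\partial_\varepsilon P_{ij}=0$. Multiplying by $\varepsilon$ gives a linear system of the form \eqref{eq:lin_sys_full}, namely
\[
\mathcal A(\varepsilon)(\dot f,\dot g)=\Big(\textstyle\sum_j P_{ij}\log P_{ij},\ \sum_i P_{ij}\log P_{ij}\Big).
\]
The right-hand side has no $1/\varepsilon$ factor, so this is the $\mathcal R(\partial\varepsilon)$ term. I would then split indices into the active part $S$ and the inactive part.
\begin{itemize}
\item On $S$ the right-hand side is bounded by $|S|\,|\log\eta|$ (together with $P_{ij}\le 1$).
\item Off $S$ the contributions are $O(\tau(\varepsilon)|\log\tau(\varepsilon)|)\to0$.
\end{itemize}
Restricting to the gauge-fixed active system and invoking Lemma~\ref{lem:schur_bound} yields $\|(\dot f,\dot g)\|\le M(\eta,|S|)\cdot O(1)$, up to a perturbation from off-support entries that vanishes as $\varepsilon\to0$. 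Substituting back into the identity gives $|\partial_\varepsilon P_{ij}|\le K/\varepsilon$ on $S$. Off $S$ one gets $O\big(\tau(\varepsilon)(|\log\tau(\varepsilon)|+1)/\varepsilon\big)$, which is $o(1/\varepsilon)$. Together these give $\|\partial_\varepsilon P^\varepsilon\|_F=O(\varepsilon^{-1})$.

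\textbf{Lower bound (main obstacle).} Here I must show that $w_{ij}:=P_{ij}(\dot f_i+\dot g_j-\log P_{ij})$ stays bounded away from zero in norm on $S$. This is genuinely delicate. If $\log P^\varepsilon|_S$ lies in the gauge range $\{a_i+b_j\}$, for instance when $S$ is a permutation support with uniform marginals so that $P_{ij}\equiv 1/n$ on $S$, then the choice $\dot f_i+\dot g_j=\log P_{ij}$ solves the system and $w\equiv0$ on $S$. In that case the derivative is actually exponentially small, so Assumption~\ref{ass:local-nondeg_full_appendix} alone does not suffice. My first attempt would therefore add an explicit non-degeneracy condition: the restriction $\log P^\varepsilon|_S$ should be uniformly separated from the additive subspace $\{a_i+b_j\}$, say with distance at least $\beta>0$.

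Under that condition I would argue as follows. The vector $w/P$ on $S$ equals the residual of $\log P$ after adding the particular additive term $\dot f_i+\dot g_j$. This residual is at least the distance to the whole additive subspace, so weighting by $P_{ij}\ge\eta$ gives $\|w\|\ge\eta\beta$. Hence $\|\partial_\varepsilon P^\varepsilon\|_F\ge\eta\beta/\varepsilon$, and combined with the upper bound this yields $\Theta(\varepsilon^{-1})$. I would flag explicitly that the $\Theta$ claim holds only under this extra separation hypothesis, whereas the $O(\varepsilon^{-1})$ half holds under Assumption~A.1 alone.
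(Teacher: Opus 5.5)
You start from the same identity as the paper, $\partial_\varepsilon P_{ij}=\frac{P_{ij}}{\varepsilon}(\dot f_i+\dot g_j-\log P_{ij})$. You are right that the lower bound is where the statement breaks. The paper controls $\dot f,\dot g$ through the Cominetti--San Mart\'{\i}n expansion and then writes $[O(1)-O(1)]=O(1)$; that gives only the upper half and never rules out cancellation. Your counterexample is genuine. Take $C_{11}=C_{22}=0$, $C_{12}=C_{21}=1$ with uniform marginals. Then $P^\varepsilon_{11}=\tfrac12(1+e^{-1/\varepsilon})^{-1}$, so Assumption~\ref{ass:local-nondeg_full_appendix} holds while $\|\partial_\varepsilon P^\varepsilon\|_F\approx e^{-1/\varepsilon}/\varepsilon^{2}$. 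Your upper-bound route (the marginal linear system plus Lemma~\ref{lem:schur_bound}) differs from the paper's and is more self-contained. However, the uniform inverse bound of that lemma needs the bipartite graph of $S$ to be connected. For a permutation support, the form $\sum_{(i,j)\in S}P_{ij}(a_i+b_j)^2$ vanishes on the $(n-1)$-dimensional gauge-fixed subspace $\{a_i=-b_{\sigma(i)},\ \sum_i a_i=0\}$. So in your own example that lemma does not control $(\dot f,\dot g)$, and you need the external expansion, as the paper uses.

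The real gap is your repair: the separation hypothesis is incompatible with Assumption~\ref{ass:local-nondeg_full_appendix}, so your conditional $\Theta$ statement is vacuous. Let $\mathcal G=\{(a_i+b_j)_{(i,j)\in S}\}$.
\begin{itemize}
\item Since $\log P^\varepsilon_{ij}=(f_i+g_j-C_{ij})/\varepsilon$ and $\mathcal G$ is a linear subspace, $\mathrm{dist}(\log P^\varepsilon|_S,\mathcal G)=\varepsilon^{-1}\mathrm{dist}(C|_S,\mathcal G)$.
\item The bound $\eta\le P^\varepsilon_{ij}\le 1$ on $S$ caps the left side at $\sqrt{|S|}\,|\log\eta|$ for all $\varepsilon\le\varepsilon_{\min}$.
\item Hence $C|_S\in\mathcal G$ (complementary slackness on $S$), and $\log P^\varepsilon|_S\in\mathcal G$ exactly for every $\varepsilon$.
\end{itemize}
So the degeneracy you found for permutations is automatic, and $\beta=0$ always. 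It follows that the bracket on $S$ is itself of the form $a_i+b_j$. The marginal equations then say that the operator of the form $\sum_{(i,j)\in S}P_{ij}(a_i+b_j)^2$, applied to $(a,b)$, equals minus the off-support contributions. Its nonzero spectrum is bounded below uniformly, because the pattern is fixed and the weights lie in $[\eta,1]$. Hence $w|_S$ is only as large as the off-support terms, and for bounded $(\dot f,\dot g)$ one gets $\|\partial_\varepsilon P^\varepsilon\|_F=o(\varepsilon^{-1})$. The paper's own expansion says the same: with $g_j=\psi_j+\varepsilon k_j+O(\varepsilon^2)$ and $\phi_i+\psi_j=C_{ij}$ on $S$, the bracket is $O(\varepsilon)$. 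So no hypothesis compatible with A.1 can deliver $\Theta(\varepsilon^{-1})$. What you can actually prove is the $O(\varepsilon^{-1})$ bound, which is all the paper's argument establishes. State the result that way and note that the claimed lower bound is false, rather than presenting it as $\Theta$ under an extra condition.
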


\begin{proof}
Recall the primal-dual relationship for the optimal plan entries on the active support:
\begin{equation}
    P^*_{ij}(\epsilon) = \exp\left( \frac{f_i(\epsilon) + g_j(\epsilon) - C_{ij}}{\epsilon} \right),
\end{equation}
where $f(\epsilon), g(\epsilon)$ are the optimal dual potentials. Taking the logarithm of both sides:
\begin{equation}
    \log P^*_{ij}(\epsilon) = \frac{f_i(\epsilon) + g_j(\epsilon) - C_{ij}}{\epsilon}.
\end{equation}
Differentiating both sides with respect to $\epsilon$ using the total derivative:
\begin{equation}
    \frac{1}{P^*_{ij}} \frac{\partial P^*_{ij}}{\partial \epsilon} = \frac{(\dot{f}_i + \dot{g}_j)\epsilon - (f_i + g_j - C_{ij})}{\epsilon^2},
\end{equation}
where $\dot{f} := \partial f / \partial \epsilon$. Rearranging terms and substituting $\frac{f_i + g_j - C_{ij}}{\epsilon} = \log P^*_{ij}$:
\begin{equation}
    \frac{\partial P^*_{ij}}{\partial \epsilon} = \frac{P^*_{ij}}{\epsilon} \left[ (\dot{f}_i(\epsilon) + \dot{g}_j(\epsilon)) - \log P^*_{ij}(\epsilon) \right].
    \label{eq:sensitivity_expansion}
\end{equation}
We now analyze the asymptotic magnitude of each term in the bracket as $\epsilon \to 0$:

\textbf{1. The Log-Probability Term ($\log P^*_{ij}$):}
On the active support $S$ (Assumption A.1), the mass $P^*_{ij}$ converges to a strictly positive constant (the solution to the unregularized linear program). Thus, $\log P^*_{ij} = O(1)$. (For inactive entries, $P^*_{ij} \to 0$ exponentially fast, making the derivative vanish regardless of the polynomial factor).

\textbf{2. The Dual Potential Derivatives ($\dot{f}, \dot{g}$):}
According to the asymptotic expansion theory for entropic optimal transport [Cominetti and San Martín, 1994], the dual potentials admit a Taylor expansion of the form:
\begin{equation}
    f_i(\epsilon) = \phi_i + \epsilon \cdot h_i + O(\epsilon^2),
\end{equation}
where $\phi_i$ are the Kantorovich potentials of the unregularized problem. Differentiating this expansion with respect to $\epsilon$ yields:
\begin{equation}
    \dot{f}_i(\epsilon) = h_i + O(\epsilon) = O(1).
\end{equation}
The same holds for $g_j(\epsilon)$. Consequently, the term $(\dot{f}_i + \dot{g}_j)$ is bounded by a constant $O(1)$.

\textbf{Conclusion:}
Substituting these scalings back into Eq.~\eqref{eq:sensitivity_expansion}:
\begin{equation}
    \frac{\partial P^*_{ij}}{\partial \epsilon} = \underbrace{\frac{P^*_{ij}}{\epsilon}}_{O(\epsilon^{-1})} \cdot \underbrace{\left[ O(1) - O(1) \right]}_{O(1)} = O(\epsilon^{-1}).
\end{equation}
Thus, the sensitivity is dominated by the $1/\epsilon$ factor.
\end{proof}

\begin{remark}[Why not $O(\epsilon^{-2})$?]
It is tempting to assume the sensitivity scales as $O(\epsilon^{-2})$ by inspecting the Gibbs kernel $K_{ij} = e^{-C_{ij}/\epsilon}$, whose derivative is $\frac{C_{ij}}{\epsilon^2}K_{ij}$. However, this view ignores the \textbf{marginal constraints}. The dual potentials $f(\epsilon)$ and $g(\epsilon)$ are adaptive; they shift specifically to counteract the mass displacement caused by the changing kernel. Mathematically, this is represented by the subtraction of the $\log P^*_{ij}$ term in Eq.~\eqref{eq:sensitivity_expansion}, which effectively cancels the higher-order singularity, reducing the divergence from $\epsilon^{-2}$ to $\epsilon^{-1}$.
\end{remark}

\subsection{Practical constant estimation recipe}
\label{app:practical_const}

To report instance-specific constants in experiments:
\begin{enumerate}
\item For each \(\varepsilon\) of interest compute \(P^\varepsilon\) and
evaluate \(p(\varepsilon)=\|P^\varepsilon\|_{\mathrm{op}}\) (via SVD).
\item Form the block matrix \(\mathcal{A}(\varepsilon)\) and evaluate
the minimal eigenvalue numerically to get \(\lambda_{\min}(\mathcal{A})\)
and thus \(M_{\mathrm{num}}(\varepsilon)=1/\lambda_{\min}(\mathcal{A})\).
\item Compute the forcing vector \(\mathbf{m}_{kl}\) for representative
\((k,l)\in S\) (or take a supremum over \((k,l)\in S\)), and evaluate
numerically \(\|\mathcal{A}(\varepsilon)^{-1}\mathbf{m}_{kl}\|_2\).
\item Plug numeric quantities into the formulas in
Lemma~\ref{lem:hessian_explicit} to obtain concrete \(C_1(\eta,n)\) and
check the separation condition \(C_1(\eta,n) < \eta\).
\end{enumerate}

\subsection{Experimental and numerical notes (reproducibility)}
\label{app:exp_full}

We preserve and expand the implementation guidance:

\paragraph{Log-domain and stability.}
For very small \(\varepsilon\) compute Sinkhorn in the log-domain using
log-sum-exp to avoid kernel underflow. Keep track of numerical tolerances
and the Sinkhorn maximum iteration count.

\paragraph{Implicit differentiation vs finite difference.}
For Jacobian estimates use the implicit differentiation linear solves
described above (cost \(\approx O(n^3)\) per right-hand side) rather
than naive finite differences (\(O(n^4)\) total in naive implementations).
Use a robust linear solver (LU with pivoting or iterative solver with
preconditioner) to solve systems involving \(\mathcal{A}(\varepsilon)\).

\paragraph{Repetition and reporting.}
Report means and standard deviations across multiple random seeds
(we recommend 5–10) when presenting empirical scaling laws for
\(\varepsilon\)-dependence.

\subsection{Raw numeric estimates and CSV}
\label{app:csv_full}

Include the CSV with columns
\texttt{eps, op\_norm, C0\_est, dS\_de\_norm, K1\_est, K2\_est, rho\_mean, rho\_std}.
When reproducing tables in the supplement please make sure the CSV file
name used by \verb|| matches the distributed file.

\subsection{Reproducibility checklist}
\begin{itemize}
\item Code and CSV for numeric tables and plots included in the supplement.
\item Random seeds and environment (Python/NumPy versions, BLAS/LAPACK
backend) recorded in the repository.
\item Implementation notes: whether log-domain or direct kernel was used,
tolerance and max-iteration values for Sinkhorn, and whether implicit
differentiation or finite differences were used for Jacobian estimates.
\end{itemize}


\section{Proof of the Thermodynamic Speed Limit}
\label{app:tracking_full}

In this section, we provide the formal proof for Theorem~\ref{thm:speed_limit} and Corollary~\ref{cor:exp_failure}, establishing the $O(\epsilon^2)$ scaling law required to prevent premature mode collapse.

\subsection{Error Dynamics Decomposition}

Let $P^*_t$ denote the optimal plan at step $t$ (corresponding to $\epsilon_t$). The tracking error $e_t = P_t - P^*_t$ evolves according to the discrete dynamics:
\begin{align}
    e_{t+1} &= P_{t+1} - P^*_{t+1} \nonumber \\
            &= \mathcal{S}_{\epsilon_{t+1}}(P_t) - P^*_{t+1} \nonumber \\
            &\approx P^*_{t+1} + J_{t+1}(P_t - P^*_{t+1}) - P^*_{t+1} \quad \text{(Linearization)} \nonumber \\
            &= J_{t+1}(P_t - P^*_t + P^*_t - P^*_{t+1}) \nonumber \\
            &= J_{t+1} e_t + J_{t+1} \underbrace{(P^*_t - P^*_{t+1})}_{\text{Drift } \Delta_t}
\end{align}
where $J_{t+1}$ is the Jacobian of the Sinkhorn map at the local optimum. The term $\Delta_t$ represents the shift of the target distribution between iterations due to the temperature change $\delta_t = \epsilon_t - \epsilon_{t+1}$.

\subsection{Lemma: Equilibrium Error Bound}

\begin{lemma}
\label{lemma:equilibrium}
Consider the recurrence $e_{t+1} = J e_t + u$ where $J$ is a contraction matrix ($\rho(J) < 1$). The asymptotic steady-state error norm is bounded by:
\begin{equation}
    \|e_\infty\| \le \|(I - J)^{-1}\| \|u\|
\end{equation}
\end{lemma}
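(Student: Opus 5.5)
The approach is to solve the affine recurrence $e_{t+1}=Je_t+u$ in closed form, identify its limit as the unique fixed point of the affine map, and then take norms. Unrolling the recurrence by induction gives
\[
e_t = J^t e_0 + \sum_{k=0}^{t-1} J^k u .
\]
Since $\rho(J)<1$, Gelfand's formula $\lim_k \|J^k\|^{1/k}=\rho(J)$ provides, for any fixed $r$ with $\rho(J)<r<1$, a constant $C_r$ such that $\|J^k\|\le C_r r^k$ for all $k\ge 0$. This step is what replaces any normality assumption: we do \emph{not} need $\|J\|<1$, which may fail for the non-normal Jacobians discussed in Theorem~\ref{app:proofs_spectral}. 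Two consequences follow. First, $J^t e_0\to 0$, so the initial error is forgotten regardless of transient growth. Second, the Neumann series $\sum_{k\ge 0}J^k$ converges absolutely, and its sum equals $(I-J)^{-1}$, because the partial sums satisfy $(I-J)\sum_{k=0}^{t-1}J^k = I-J^t \to I$.

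Passing to the limit, $e_t$ converges to
\[
e_\infty := (I-J)^{-1}u .
\]
Equivalently, $e_\infty$ is the unique solution of $e=Je+u$; uniqueness holds because $1\notin\sigma(J)$, so $I-J$ is invertible, as already used in Lemma~\ref{lemma:A6_replacement}. Submultiplicativity of the operator norm then gives $\|e_\infty\|\le \|(I-J)^{-1}\|\,\|u\|$, which is the claim. I would also note the same bound in the form $\limsup_t\|e_t\|\le\|(I-J)^{-1}\|\,\|u\|$, since Theorem~\ref{thm:speed_limit} is phrased with a $\limsup$.

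The step needing the most care is the geometric decay of $\|J^k\|$ under only a spectral-radius hypothesis. I would invoke Gelfand's formula directly, or alternatively the existence of an induced norm $\|\cdot\|_*$ with $\|J\|_*<1$, then transfer back to the Euclidean norm by norm equivalence in finite dimensions.

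Two remarks for use in the subsequent speed-limit proof:
\begin{itemize}
\item The bound is deliberately stated through the resolvent norm rather than through $1/(1-\rho(J))$. Combining it with items (3)--(4) of Lemma~\ref{lemma:A6_replacement} yields $\|e_\infty\|\le \kappa(V)\|u\|/\mathrm{dist}(1,\sigma(J))$ in the diagonalizable case.
\item When $J=J_t$ and $u=J_{t+1}\Delta_t$ vary slowly with $t$, the lemma holds only in a frozen-coefficient sense. The time-varying extension would require a uniform bound $\|J_t\cdots J_s\|\le C r^{t-s}$, which I would flag as an additional assumption rather than prove here.
\end{itemize}
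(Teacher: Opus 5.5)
Your proof is correct and follows the same route as the paper: unroll the recurrence, identify the limit through the Neumann series $\sum_{k\ge 0}J^k=(I-J)^{-1}$, and finish with submultiplicativity of the operator norm. Your version is more careful than the paper's. The paper tacitly takes $e_0=0$ and never justifies convergence of the Neumann series when only $\rho(J)<1$ is assumed, and your Gelfand-formula bound $\|J^k\|\le C_r r^k$ closes both gaps.
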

\begin{proof}
Unrolling the recurrence yields $e_k = \sum_{i=0}^{k-1} J^i u$. Taking the limit as $k \to \infty$ gives the Neumann series $\sum_{i=0}^\infty J^i = (I-J)^{-1}$. Applying the operator norm consistency yields the bound.
\end{proof}

\subsection{Proof of Theorem~\ref{thm:speed_limit}}

We apply Lemma~\ref{lemma:equilibrium} to the Sinkhorn error dynamics derived above.

\textbf{1. Quantifying the Drift ($\Delta_t$):}
Using a first-order Taylor expansion and the sensitivity result from Proposition~\ref{prop:sinkhorn_dynamics} (Item 1), the distributional drift is:
\begin{equation}
    \|\Delta_t\| = \|P^*_t - P^*_{t+1}\| \approx \|\nabla_\epsilon P^*\| \cdot |\epsilon_t - \epsilon_{t+1}| = \Theta(\epsilon_t^{-1}) \cdot \delta_t
\end{equation}

\textbf{2. Quantifying the Resolvent ($\|(I - J)^{-1}\|$):}
From Proposition~\ref{prop:sinkhorn_dynamics} (Item 3), the vanishing spectral gap dictates the resolvent growth:
\begin{equation}
    \|(I - J_{\epsilon})^{-1}\| = \Theta(\epsilon_t^{-1})
\end{equation}
Note: In the non-normal regime, pseudospectral effects may cause this term to grow even faster, but $\Theta(\epsilon^{-1})$ serves as a conservative lower bound for the necessary condition.

\textbf{3. Combining Terms:}
Substituting these into the bound from Lemma~\ref{lemma:equilibrium}:
\begin{equation}
    \|e_{ss}\| \le \Theta(\epsilon_t^{-1}) \times \left( \Theta(\epsilon_t^{-1}) \cdot \delta_t \right) = \Theta(\epsilon_t^{-2}) \cdot \delta_t
\end{equation}

\textbf{4. Stability Condition:}
To prevent the trajectory from escaping the basin of attraction (Mode Collapse), the steady-state error $\|e_{ss}\|$ must remain smaller than the linearization basin radius $R$. Assuming $R$ shrinks slowly or is $O(1)$:
\begin{equation}
    \Theta(\epsilon_t^{-2}) \cdot \delta_t \le R \implies \delta_t \le O(\epsilon_t^2 R)
\end{equation}
This confirms that the step size $\delta_t$ must scale quadratically with $\epsilon$ to maintain tracking. \hfill \qed

\subsection{Proof of Corollary~\ref{cor:exp_failure}}

Assume a standard exponential annealing schedule $\epsilon_{t+1} = \alpha \epsilon_t$ with decay rate $0 < \alpha < 1$. The step size is:
\begin{equation}
    \delta_t = \epsilon_t - \alpha \epsilon_t = (1-\alpha)\epsilon_t
\end{equation}
Substituting this actual step size into the stability ratio derived in the proof of Theorem~\ref{thm:speed_limit}:
\begin{equation}
    \frac{\|e_{ss}\|}{R} \propto \frac{\epsilon_t^{-2} \cdot (1-\alpha)\epsilon_t}{R} = \frac{1-\alpha}{R \cdot \epsilon_t}
\end{equation}
Analyzing the limit as $\epsilon_t \to 0$, we see that this ratio diverges to infinity:
\begin{equation}
    \lim_{\epsilon_t \to 0} \frac{\|e_{ss}\|}{R} = \infty
\end{equation}
Thus, for any fixed decay rate $\alpha$ and basin radius $R$, there exists a critical temperature $\epsilon_{\text{crit}}$ below which the tracking error inevitably exceeds the basin capacity ($R$), triggering premature collapse. \hfill \qed

\begin{table}[p] 
    \centering
    \footnotesize 
    \renewcommand{\arraystretch}{1.1} 
    
\begin{tabular}{@{} p{0.30\textwidth} p{0.65\textwidth} @{}}
    \toprule
    \textbf{Main-text claim} & \textbf{Appendix reference} (proof / technical details) \\
    \midrule

Sensitivity of the Sinkhorn fixed point scales as $O(1/\varepsilon)$ &
Appendix~\ref{app:implicit_diff}, Eq.~\eqref{eq:lin_sys_full};  
Lemma~\ref{lem:sensitivity_lower_bound} (Appendix~\ref{app:proof_sensitivity_full}) \\

Existence of a stable active support set under small $\varepsilon$ &
Assumption~\ref{ass:local-nondeg_full_appendix} (Appendix~\ref{app:prelim_full}) \\

Reduction to the active Schur-complement operator $\mathcal A_S(\varepsilon)$ &
Appendix~\ref{app:A_schur}; Lemma~\ref{lem:schur_bound} \\

Implicit differentiation identity 
$DS_\varepsilon = (I-J_\varepsilon)^{-1}\partial_C\Phi$ &
Lemma~\ref{lemma:A6_replacement} (Appendix~\ref{app:proof_sensitivity_full}) \\

Resolvent norm $\|(I-J_\varepsilon)^{-1}\|$ must diverge as $\varepsilon \to 0$ &
Lemma~\ref{lemma:A6_replacement}; quantitative constants in Appendix~\ref{app:practical_const} \\

Non-normal transient amplification despite spectral contraction &
Theorem \ref{app:proofs_spectral}; full proof in Appendix~\ref{app:proof_non_normal} \\

Effective basin shrinkage by modal condition number $\kappa(V)$ &
Appendix~\ref{app:proof_non_normal}, transient growth bound \\

Discrete-time tracking formulation for annealing &
Appendix~\ref{app:tracking_full}, Section ``Discrete Tracking Model'' \\

Existence of a dynamical annealing speed limit &
Theorem~\ref{thm:speed_limit}; proof and constants in Appendix~\ref{app:tracking_full} \\

$O(1)$ per-step drift under exponential schedule $\varepsilon_{t+1}=\alpha\varepsilon_t$ &
Lemma~\ref{lem:sensitivity_lower_bound}; discussion in Appendix~\ref{app:proof_sensitivity_full} \\

Second-order remainder controlled by $O(1/\varepsilon^2)$ Hessian bound &
Lemma~\ref{lem:hessian_explicit} (Appendix~\ref{app:proof_sensitivity_full}) \\

Definition and computation of numerical sensitivity constant $M_{\mathrm{num}}(\varepsilon)$ &
Appendix~\ref{app:practical_const}, Section ``Numerical Constants'' \\

Justification of QSA drift estimation via implicit differentiation &
Appendix~\ref{app:implicit_diff}; error control in Appendix~\ref{app:tracking_full} \\

ASC admissible drift threshold 
$\|\Delta_t\| \le \frac{1-\rho(J_\varepsilon)}{\kappa(V)}R$ &
Appendix~\ref{app:tracking_full}, stability inequality derivation \\

Practical choice of $(\alpha_0,\beta,R)$ in ASC &
Appendix~\ref{app:practical_const} \\

Failure of linear diagnostics under support bifurcation &
Appendix~\ref{app:prelim_full}, discussion following Assumption~\ref{ass:local-nondeg_full_appendix} \\

\bottomrule
\end{tabular}
\caption{Alignment between main-text claims and appendix results. Each nontrivial statement in the main paper is backed by an explicit lemma, theorem, or derivation in the appendix.}
\label{tab:claim_appendix_map}
\end{table}


\bibliographystyle{plain}  
\bibliography{reference}  

\end{document}